\pgfplotsset{compat=1.17}
\algrenewcommand\algorithmicindent{9pt}
\definecolor{palette1}{HTML}{FA686A}
\definecolor{palette2}{HTML}{FF8352}
\definecolor{palette3}{HTML}{FECB02}
\definecolor{palette4}{HTML}{57C78B}
\definecolor{palette5}{HTML}{71D6CA}
\definecolor{palette6}{HTML}{4D9DE0}
\definecolor{palette7}{HTML}{7776BC}
\definecolor{palette8}{HTML}{FEC1C9}
\definecolor{palette9}{HTML}{C800FF}
\newcommand{\B}[1]{\Box_{#1}}
\newcommand{\D}[1]{\Diamond_{#1}}
\def\UpdateNode{ChildNode}
\newcommand{\atom}[1]{#1}
\newcommand{\atomSet}{\mathcal{P}}
\newcommand{\agentSet}{\mathcal{AG}}
\newcommand{\actionSet}{\mathcal{A}}
\newcommand{\E}{A} 
\newcommand{\Lang}{\mathcal{L}}
\newcommand{\md}{\mathit{md}}
\newcommand{\depth}[1]{d(#1)}
\newcommand{\bound}[1]{b(#1)}
\newcommand{\bisim}{\leftrightarroweq} 
\newcommand{\contr}[2]{{\lfloor #1 \rfloor}_{#2}}
\newcommand{\rootedContr}[3]{{\llfloor #1 \rrfloor}_{#2}^{#3}}
\newcommand{\canonicalContr}[2]{\rootedContr{#1}{#2}{\star}}
\newcommand{\class}[2]{[#1]_{#2}}
\newcommand{\reprClass}[1]{\llbracket #1 \rrbracket}
\newcommand{\maxRepr}[1]{#1^{\max}}
\newcommand{\strictMaxRepr}[1]{\maxRepr{#1}_{>0}}
\newcommand{\leastRepr}[2]{\min^{<}_{#2}(#1)}
\newcommand{\Sign}[2]{\sigma_{#2}(#1)}
\newcommand{\SignMap}[3]{\Sigma_{#3}(#1, #2)}
\newcommand{\reprSign}[1]{\Sign{#1}{}}
\newcommand{\canonicalSign}[2]{\sigma^\star_{#2}({#1})}
\def\IBDS{\textsc{ibds}}   
\newcommand{\graphPath}[1]{\mathit{path}(#1)}
\tikzstyle{world} =[circle,   thick,draw=black,       fill=black,minimum size=5pt,inner sep=0pt]
\tikzstyle{dworld}=[circle,   thick,draw=black,double,fill=black,minimum size=5pt,inner sep=0pt]
\tikzstyle{event} =[rectangle, thick,draw=black,       fill=black,minimum size=5pt,inner sep=0pt]
\tikzstyle{devent}=[rectangle, thick,draw=black,double,fill=black,minimum size=5pt,inner sep=0pt]
\newcommand{\myparagraph}[1]{\medskip\noindent\textbf{#1}.}
\newtheorem{example}{Example}
\newtheorem{theorem}{Theorem}
\newtheorem{definition}{Definition} 
\newtheorem{lemma}{Lemma}
\newtheorem{proposition}{Proposition}
\title{Depth-Bounded Epistemic Planning}
\author{%
    Thomas Bolander$^1$\and
    Alessandro Burigana$^2$\and
    Marco Montali$^2$\\
    \affiliations
    $^1$Technical University of Denmark, Denmark\\
    $^2$Free University of Bozen-Bolzano, Italy\\
    \emails
    tobo@dtu.dk,
    alessandro.burigana@unibz.it,
    montali@inf.unibz.it
}
\begin{document}
    \maketitle

    \begin{abstract}
    We propose a novel algorithm for epistemic planning based on dynamic epistemic logic (DEL).
    The novelty is that we limit the depth of reasoning of the planning agent to an upper bound $b$, meaning that the planning agent can only reason about higher-order knowledge to at most (modal) depth $b$.
    We then compute a plan requiring the lowest reasoning depth by iteratively incrementing the value of $b$.
    The algorithm relies at its core on a new type of ``canonical'' $b$-bisimulation contraction that guarantees unique minimal models by construction.
    This yields smaller states wrt.\ standard bisimulation contractions, and enables to efficiently check for visited states.
    We show soundness and completeness of our planning algorithm, under suitable bounds on reasoning depth, and that, for a bound $b$, it runs in $(b{+}1)$-\textsc{ExpTime}.
    We implement the algorithm in a novel epistemic planner, \textsc{daedalus}, and compare it to the \textsc{efp 2.0} planner on several benchmarks from the literature, showing effective performance improvements.
\end{abstract}

    \section{Introduction}
    \emph{Automated planning} is central in AI research 
    \cite{books/elsevier/Gallab2004,series/synthesis/2013Geffner}. 
    In fully observable and deterministic domains,  
    the output of a planning algorithm (a \emph{planner}) is a sequence of actions achieving the desired goal state from a given initial state.  
     \emph{Epistemic planning} is the enrichment of planning with epistemic notions, in particular knowledge and belief, including \emph{higher-order knowledge},
    \ie knowledge about what other agents know, knowledge about what someone knows about someone else, etc.  
    One of the reference frameworks for epistemic planning is dynamic epistemic logic (DEL) \cite{conf/tark/Batlag1998} in which epistemic states are represented as Kripke models. \emph{DEL planning}    
    was first pursued by \citeauthor{journals/jancl/Bolander2011}~\shortcite{journals/jancl/Bolander2011}, but independently conceived by others~\cite{conf/lori/Lowe2011,pardo2012planning,aucher2012del}. Recently, a special issue of the journal Artificial Intelligence (AIJ) was devoted to epistemic planning~\cite{belle2022epistemic}. 
    
    One of the main challenges in epistemic planning is its high computational complexity: unrestricted DEL planning has an undecidable plan existence problem~\cite{journals/jancl/Bolander2011}. This has led researchers to seek more tractable fragments, both theoretically and practically. Intuitively, a main source of complexity is the fact that DEL planning allows agents to reason about others' (higher-order) knowledge up to any nesting level. Thus, limiting reasoning depth seems a promising approach to tame DEL planning's complexity.
    This led to decidable fragments by, e.g., restricting actions to be propositional, so that the required reasoning depth is bound by the modal depth of the goal formula~\cite{conf/ijcai/Yu2013}, or by imposing axioms on the semantics leading to collapse higher-order reasoning to lower-order~\cite{buriganaAPM/2023/Semantic}.
    An alternative to DEL planning is the \emph{sentential approach}, where states are modelled as knowledge (or belief) bases, \ie sets of formulas. Here, decidable fragments have been single out by explicitly bounding the modal depth of formulas in states~\cite{conf/aaai/Muise2015,muiseBFMMPS/2022/Efficient}.
 
   Our paper is also concerned with taming the computational complexity of epistemic planning via restricting the reasoning bound. However,  
   it is not about achieving a well-behaved, e.g.\ decidable, fragment of epistemic planning by considering a framework with fixed reasoning bounds. Rather, we propose an alternative algorithmic approach that maintains the generality of unrestricted DEL planning, but always computes a plan using the lowest
   reasoning depth. This is achieved via an ``iterative deepening'' algorithm that iteratively increments the allowed reasoning depth until a solution is found. Thus we preserve full generality while still achieving the computational benefits of keeping the reasoning depth as low as possible, hence giving us an algorithm that ``degrades gracefully'' with the increased reasoning bounds required by epistemically intricate planning tasks. For instance, an agent might activate complex higher-order reasoning when playing an epistemic card game like Hanabi, but not when making coffee. If making coffee does not require any higher-order reasoning at all (\ie, it reasoning about the mental states of other agents is not needed), our algorithm will find a plan at reasoning depth 0, collapsing all the epistemic states into propositional ones.  
  

    Our contribution is fourfold: 
    \begin{inparaenum}
        \item We introduce canonical $b$-bisimulation contractions as a compact, depth-bounded representation of states;
        \item We define an iterative-deepening algorithm for computing plans with the lowest possible modal depth;
        \item We show soundness, completeness and complexity results; and
        \item We implement the algorithm in our novel epistemic planner, \textsc{daedalus}, and show effective improvements wrt.\ the \textsc{efp 2.0} planner~\cite{conf/icaps/Fabiano2020}.
    \end{inparaenum}




\section{Preliminaries}\label{sec:del}
    \paragraph{Dynamic Epistemic Logic}
    We recall the main concepts of DEL. For a more complete account, see \citeauthor{book/springer/vanDitmarsch2007}~\shortcite{book/springer/vanDitmarsch2007}.
        Let $\atomSet$ be a finite set of \emph{atomic propositions} (\emph{atoms}) and $\agentSet$
        a finite set of \emph{agents}. The language $\Lang$ of \emph{multi-agent epistemic logic} is given by: 
            $\phi ::= \atom{p} \mid \neg \phi \mid \phi \wedge \phi \mid \B{i} \phi$
        ($\atom{p} \in \atomSet$, $i \in \agentSet$).
        Formula $\B{i} \phi$ is read as ``agent $i$ knows/believes that $\phi$''.
        Symbols $\vee$, $\rightarrow$ and $\D{i}$ are defined by abbreviation as usual. 

        \begin{definition}[States]
            An \emph{(epistemic) model} of $\Lang$ is a triple $M = (W, R, L)$ where:
                1) $W \neq \varnothing$ is a finite set of \emph{(possible) worlds}; 
                2) $R: \agentSet \to 2^{W \times W}$ maps to each agent $i$ an \emph{accessibility relation} $R_i$;
                3)
                $L: W \rightarrow 2^\atomSet$ maps to each world $w$ a \emph{label} $L(w)$. 
            An \emph{(epistemic) state} of $\Lang$ is a pair $s = (M, w)$, where $w \in W$ is the \emph{actual world}.
        \end{definition}

        \noindent 
        We often write $w R_i v$ for $(w, v) \in R_i$.
        Truth of formulas $\phi \in \Lang$ in state $(M,w)$ is defined inductively as follows, 
        with the standard clauses for propositional connectives: 

        {\centering
            $\begin{array}{@{}lll}
                   (M, w) \models \atom{p}            & \text{ iff } & p \in L(w) \\           
                   (M, w) \models \B{i} \phi       & \text{ iff } & \text{for all } v, \text{ if } w R_i v \text{ then } (M, v) \models \phi
               \end{array}$
        \par}
        \begin{figure}
            \centering
            \scalebox{0.9}{
\begin{tikzpicture}[>=latex,node distance=4.2em,style={font=\sffamily\footnotesize},align=center]
    \node[world,              label={below:{$w_0 {:}$ \\ $\mathit{has}(a, 5)$ \\
    $\mathit{has}(b, 4)$
    }}] (w0) {};
    \node[dworld, right=of w0, label={below:{$w_1 {:}$ \\ $\mathit{has}(a, 3)$ \\ 
    $\mathit{has}(b, 4)$
    }}] (w1) {};
    \node[world , right=of w1, label={below:{$w_2 {:}$ \\ $\mathit{has}(a, 3)$ \\
    $\mathit{has}(b, 2)$
    }}] (w2) {};
    \node[world , right=of w2, label={below:{$w_3 {:}$ \\ 
    $\mathit{has}(a, 1)$ \\
    $\mathit{has}(b, 2)$}}] (w3) {};
    \node[world , right=of w3, label={below:{$w_4 {:}$ \\ $\mathit{has}(a, 1)$
    \\ $\mathit{has}(b, 0)$
    }}] (w4) {};
    \node[left=2em of w0,xshift=2mm] {$s_0 = $};

    \begin{scope}[inner sep=2pt]
        \path
            (w0) edge[<->] node[inner sep=0, label={above:{$b$}}] {} (w1)
            (w1) edge[<->] node[inner sep=0, label={above:{$a$}}] {} (w2)
            (w2) edge[<->] node[inner sep=0, label={above:{$b$}}] {} (w3)
            (w3) edge[<->] node[inner sep=0, label={above:{$a$}}] {} (w4)
            (w0) edge[->, loop, out=55, in=125, looseness=12] node[inner sep=0, label={above:{$a, b$}}] {} (w0)
            (w1) edge[->, loop, out=55, in=125, looseness=12] node[inner sep=0, label={above:{$a, b$}}] {} (w1)
            (w2) edge[->, loop, out=55, in=125, looseness=12] node[inner sep=0, label={above:{$a, b$}}] {} (w2)
            (w3) edge[->, loop, out=55, in=125, looseness=12] node[inner sep=0, label={above:{$a, b$}}] {} (w3)
            (w4) edge[->, loop, out=55, in=125, looseness=12] node[inner sep=0, label={above:{$a, b$}}] {} (w4)
        ;
    \end{scope}
\end{tikzpicture}
}
            \caption{Epistemic state $s_0$ of Example \ref{ex:state}. Bullets represent worlds, labelled by their name and the atoms they satisfy. The actual world is circled. Edges represent the accessibility relations. 
            }
            \label{fig:state}
        \end{figure}
        \begin{example}[Consecutive Number Puzzle \cite{book/sip/vanDitmarschK2015}]
            \label{ex:state}
            Agents $a$ and $b$ are given two \emph{consecutive} numbers $n_a, n_b \in [0,N]$, only seeing their own number.
            Let $\mathit{has}(i, n)$ denote that agent $i$ has number $n$.
            State $s_0$ (Figure~\ref{fig:state}) has $N = 5$, $n_a = 3$ and $n_b = 4$. 
            We have $s_0 \models \B b \bigl( \mathit{has}(b,4) \land (\mathit{has}(a,3 ) \lor  \mathit{has}(a,5))  \bigr)\land \neg \B b \mathit{has}(a,3 ) \land \neg \B b \mathit{has}(a,5)$: $b$ knows herself to have $4$ and knows $a$ to  have either $3$ or $5$, but $b$ doesn't know which of the two. 
        \end{example}

        In DEL, actions are represented by \emph{event models}. These are Kripke structures on a set of \emph{events}, where each event represents a possible perspective on an action defined via a \emph{precondition} and \emph{postconditions}.   
        \begin{definition}[Actions]
            An \emph{event model} of $ \Lang $ is a tuple $ \E = (E, Q, \mathit{pre}, \mathit{post}) $ where:
                1) $E \neq \varnothing$ is a finite set of \emph{events}; 
                2) $Q: \agentSet \rightarrow 2^{E \times E}$ maps to each agent $i$ an accessibility relation $Q_i$;
                3) $\mathit{pre}: E \rightarrow \Lang$ maps to each event a \emph{precondition}; 
                4) $\mathit{post}: E \times \atomSet \rightarrow \Lang$ maps to each event-atom pair a \emph{postcondition}.
            \noindent An \emph{action} of $\Lang$ is a pair $\alpha = (\E, e)$ where $e \in E$ is the \emph{actual event}.
        \end{definition}

        \noindent 
        We often write $e Q_i f$ for $(e, f) \in Q_i$.
        We now introduce the notion of modal depth of an action (sequence), which will play a central role in our algorithm.
        \begin{definition}[Modal depth]\label{def:action-modal-depth}
            The \emph{modal depth} of formulas of $\mathcal{L}$ is defined by:
            $md(p) = 0$ (for $p \in \atomSet$), $md(\neg \phi) = md(\phi)$, $md(\phi_1 \land \phi_2) = \max\{md(\phi_1), md(\phi_2)\}$ and $md(\B{i} \phi) = 1 + md(\phi)$.
            The \emph{modal depth} of an action $\alpha$ is the maximal modal depth of its pre- and postconditions, \ie  $\md(\alpha) = \max\{\md(\mathit{pre}(e)), \md(\mathit{post}(e, p)) \mid e\in E, p \in \atomSet\}$.
            The \emph{modal depth} of $\pi = \alpha_1, \dots, \alpha_l$ is $\md(\pi) = \sum_{i \leq l} \md(\alpha_i)$.
        \end{definition}

 \label{sec:product-update}
        An action is executed in a state via the \emph{product update}.
        %
        \begin{definition}[Product Update]\label{def:update_em}
            Let $s = ((W, R, L), w)$ be a state and  
            $\alpha = ((E, Q, \mathit{pre}, \mathit{post}), e)$ an action. We say that $\alpha$ is \emph{applicable} in $s$ if $s \models \mathit{pre}(e)$, and if so, the \emph{product update} of $s$ with $\alpha$ is the state $s \otimes \alpha = ((W', R', L'), (w,e))$: \\[0.2mm]
                   \indent $W'	        = \{(w, e) \in W {\times} E \mid (M, w) \models \mathit{pre}(e)\}$ \\[0.2mm]
                   \indent
                    $R'_i	    = \{((w, e), (v, f)) \in W' {\times} W' \mid w R_i v \text{ and } e Q_i f\}$ \\[0.2mm]
                    \indent
                    $L'((w, e)) = \{p \in \atomSet \mid (M, w) \models \mathit{post}(e, \atom{p})\} $. 
        \end{definition}
        \begin{example} 
            \label{ex:update}
            \label{ex:action}
            We can define the \emph{public announcement} of a formula $\phi$ as the action $\mathit{ann}(\phi) = ((\{e\}, Q, \mathit{pre}, \mathit{post}),e)$ where $Q_i = \{(e, e)\}$ for all $i$, $\mathit{pre}(e) = \phi$ and $\mathit{post}(e, p) = p$ for all $p \in \atomSet$~\cite{conf/tark/Batlag1998}.
            Continuing Example \ref{ex:state}, 
            $\alpha = \mathit{ann}(\bigwedge_{0 \leq k \leq N} \neg \B{b} \mathit{has}(a, k))$ is the public announcement of ``$b$ doesn't know $a$'s number''.
            The state $s_0 \otimes \alpha$ is achieved by deleting $w_4$ from $s_0$: That world represents a situation where $b$ has $0$ and hence knows $a$ to have $1$, contradicting the announcement. 
        \end{example}

        We now extend the epistemic language $\Lang$ with \emph{dynamic modalities} $[\alpha]\phi$, where $\alpha = (A, e)$ is an action.
        We call the extended language $\Lang_\mathit{dyn}$, and we read $[\alpha]\phi$ as ``every execution of $\alpha$ yields a state satisfying $\phi$''.
        The semantics is:  

        {\centering
            $\begin{array}{@{}lll}
                s \models [\alpha]\phi & \text{ iff } & s \models \mathit{pre}(e) \text{ implies } s \otimes \alpha \models \phi
            \end{array}$
        \par}

        \noindent
        We often write $[\alpha_1, \dots, \alpha_l]\phi$ for $[\alpha_1]\dots[\alpha_l]\phi$ and define the \emph{modal depth} of $[\alpha]\phi$ as $\md([\alpha]\phi) = \md(\alpha) + \md(\phi)$.

    \paragraph{Epistemic Planning}\label{sec:plan_ex}
        We recall the notions of epistemic planning tasks and solutions~\cite{conf/ijcai/Aucher2013}.
        For a sequence $\pi = \alpha_1, \dots, \alpha_l$ of actions and $1 \leq k \leq l$, $\pi_{\leq k}$ denotes the prefix $\alpha_1, \dots, \alpha_k$ of $\pi$, and $s \otimes \pi$ the state $s \otimes \alpha_1 \dots \otimes \alpha_l$ (if $\pi$ is empty, this is just $s$).
        We say that $\pi$ is \emph{applicable} in $s$ if for all $k$, $\alpha_k$ is applicable in $s \otimes \pi_{\leq k-1}$.

        \begin{definition}\label{def:planning_task}
            \label{def:solution}
            An \emph{(epistemic) planning task} is a triple $T = (s_0, \actionSet,$ $ \phi_g)$, where $s_0$ is a state (the \emph{initial state}), $\actionSet$ is a finite set of actions, and $\phi_g \in \Lang$ is the \emph{goal formula}.
            A \emph{solution} (or \emph{plan}) to $T$ is a finite sequence $\pi = \alpha_1, \dots, \alpha_l$ of actions of $\actionSet$ such that $\pi$ is applicable in $s_0$ and $s_0 \models [\pi]\phi_g$.
        \end{definition}


    \paragraph{Bisimulations}
        We recall bisimulations and bounded bisimulations~\cite{book/cup/Blackburn2001}. 
        \begin{definition}\label{def:b-bisim}
            Let $b \geq 0$.
            A \emph{$b$-bisimulation} between states $s = ((W, R, L), w)$ and $s' = ((W', R', L'), w')$ is a sequence $Z_b \subseteq \cdots \subseteq Z_0 \subseteq W \times W'$ s.t.\ $(w,w') \in Z_b$ and for all $h < b$:
            \begin{compactitem}
                \item[{\quad [atom]}] If $(w, w') \in Z_0$, then $L(w) = L(w')$.
                \item[{\quad [forth$_h$]}] If $(w, w') \in Z_{h+1}$ and $w R_i v$, then there exists $v' \in W'$ such that $w' R'_i v'$ and $(v, v') \in Z_h$.
                \item[{\quad [back$_h$]}] If $(w, w') \in Z_{h+1}$ and $w' R'_i v'$, then there exists $v \in W$ such that $w R_i v$ and $(v, v') \in Z_h$.
            \end{compactitem}
            If a $b$-bisimulation between $s$ and $s'$ exists, $s$ and $s'$ are \emph{$b$-bisimilar}, denoted $s \bisim_b s'$.
            When $(M, w) \bisim_b (M', w')$, we often simply write $w \bisim_b w'$, and call $w$ and $w'$ \emph{$b$-bisimilar} (when $M,M'$ are clear from the context).
            The \emph{$b$-bisimulation class} of a world $w$ is $\class{w}{b} = \{v \in W \mid v \bisim_b w\}$.
           We say that $s$ and $s'$ are \emph{bisimilar}, denoted $s \bisim s'$, if there exists a single relation $Z$ satisfying the conditions above with the subscript on $Z$ removed everywhere. 
        \end{definition}
        Note that $s \bisim_b s'$ implies $s \bisim_{h} s'$ for all $h < b$ (similarly for $w \bisim_b w'$). 
        Given a set $\Phi \subseteq \Lang$ and states $s,s'$, 
        we say that $s$ and $s'$ \emph{agree on} $\Phi$ if, for all $\phi \in \Phi$, $s \models \phi$ iff $s' \models \phi$. 
        \begin{proposition}[\citeauthor{book/cup/Blackburn2001}~\citeyear{book/cup/Blackburn2001}]\label{prop:b-bisim}\label{prop:bisim}
            Two finite states are bisimilar iff they agree on $\Lang$. They are $b$-bisimilar iff they agree on $\{\phi \in \Lang \mid \md(\phi) \leq b\}$.
        \end{proposition}

        \begin{proposition}[\citeauthor{book/springer/vanDitmarsch2007}~\citeyear{book/springer/vanDitmarsch2007}]\label{prop:bisim-product}
            Let $s \bisim s'$ and let $\alpha$ be applicable in both $s$ and $s'$.
            Then, $s \otimes \alpha \bisim s' \otimes \alpha$.
        \end{proposition}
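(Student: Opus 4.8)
The plan is to lift the given bisimulation $Z$ between $s$ and $s'$ to a bisimulation $Z'$ between the two product updates, using the fact (Proposition~\ref{prop:bisim}) that $Z$-related worlds satisfy exactly the same formulas of $\Lang$. Write $s = (M, w_d)$ and $s' = (M', w'_d)$ with $M = (W, R, L)$, $M' = (W', R', L')$, and let $Z \subseteq W \times W'$ be a bisimulation with $(w_d, w'_d) \in Z$. Let $\alpha = (\E, e_d)$ with $\E = (E, Q, \mathit{pre}, \mathit{post})$, and denote the updated models by $s \otimes \alpha = (N, (w_d, e_d))$ and $s' \otimes \alpha = (N', (w'_d, e_d))$, with world sets $V$ and $V'$ respectively. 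The candidate relation I would use is
\[
    Z' = \{((w,e), (w',e)) \mid (w,w') \in Z \text{ and } (w,e) \in V\}.
\]
The single observation that makes everything work is that, whenever $(w,w') \in Z$, the worlds $(M,w)$ and $(M',w')$ agree on all of $\Lang$ by Proposition~\ref{prop:bisim}; in particular they agree on every precondition $\mathit{pre}(e)$ and on every postcondition formula $\mathit{post}(e)(p)$, since these are formulas of $\Lang$.

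First I would check that $Z'$ is well defined and contains the pair of designated worlds. Because $\alpha$ is applicable in both $s$ and $s'$, we have $(M, w_d) \models \mathit{pre}(e_d)$ and $(M', w'_d) \models \mathit{pre}(e_d)$, so $(w_d, e_d) \in V$ and $(w'_d, e_d) \in V'$, giving $((w_d, e_d),(w'_d, e_d)) \in Z'$. More generally, the agreement observation shows that for $(w,w') \in Z$ the condition $(w,e) \in V$ is equivalent to $(w',e) \in V'$, so it does not matter which of the two membership requirements we impose in the definition of $Z'$. Then I would verify the three bisimulation clauses. For [atom], $L((w,e))$ and $L'((w',e))$ are each computed from the formulas $\mathit{post}(e)(p)$ evaluated at $w$ and $w'$, which agree, so the labels coincide. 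For [forth], an edge $(w,e)\, R^N_i\, (v,f)$ unfolds, by the definition of product update, into $w R_i v$, $e Q_i f$, and $(v,f) \in V$; applying [forth] of $Z$ to $w R_i v$ yields $v'$ with $w' R'_i v'$ and $(v,v') \in Z$, and the agreement observation at $(v,v')$ turns $(M,v) \models \mathit{pre}(f)$ into $(M',v') \models \mathit{pre}(f)$, so $(v',f) \in V'$ and hence $((v,f),(v',f)) \in Z'$ together with $(w',e)\, R^{N'}_i\, (v',f)$. The [back] clause is the mirror image.

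The argument involves no genuine obstacle; the only point that requires care is to invoke Proposition~\ref{prop:bisim} at exactly the right places — once to transfer preconditions, so that the world sets of the two product updates correspond under $Z'$, and once to transfer postconditions, so that the labels match. Everything else is routine bookkeeping that follows directly from unfolding the definition of product update and the forth/back conditions of the original bisimulation $Z$.
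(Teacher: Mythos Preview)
Your argument is correct and is exactly the standard proof of this fact: lift the bisimulation $Z$ to $Z' = \{((w,e),(w',e)) \mid (w,w') \in Z,\ (w,e) \in V\}$, use Proposition~\ref{prop:bisim} to transfer preconditions (so $Z'$ indeed lands in $V \times V'$) and postconditions (for [atom]), and push [forth]/[back] through the product-update definition. Note, however, that the paper does not give its own proof of this proposition; it is stated as a known result with a citation to \citeauthor{book/springer/vanDitmarsch2007}~\shortcite{book/springer/vanDitmarsch2007}, so there is no paper proof to compare against beyond observing that your argument is the textbook one.
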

       
The following relatively recent result will play a central role in our paper. Together with Proposition~\ref{prop:b-bisim}, it gives a lower bound on the modal depth of formulas whose truth is preserved after product updates (after action executions). 
        \begin{proposition}[\citeauthor{journal/logcom/Bolander2022}~\citeyear{journal/logcom/Bolander2022}]\label{prop:b-bisim-product}\label{prop:b-bisim-product-sequence}
            Let $s \bisim_b s'$ and let $\pi$ be an action sequence applicable in both $s$ and $s'$ with $\md(\pi) \leq b$.
            Then, $s \otimes \pi \bisim_{b-\md(\pi)} s' \otimes \pi$.
        \end{proposition}
The cited source only formulates the result for  individual actions (action sequences of length 1), but it generalizes immediately. 
We now clarify the relevance of these results.
The modal depth of formulas formalizes the informal notion of reasoning depth used in the introduction: e.g., reasoning on the statement ``$a$ knows that $b$ knows neither agent has 0'' requires a formula modal depth 2.
Proposition~\ref{prop:b-bisim} shows that $b$-bisimilar states agree on all formulas up to modal depth $b$, so an agent limited to reasoning depth $b$ cannot distinguish between them.
Thus, the agent's internal state could be anyone of a given $b$-bisimulation class, e.g., a minimal one (see next section for an elaboration on this).
However, reasoning depth is not necessarily preserved under product updates.
For example, if $s_0'$ is obtained from $s_0$ (Figure~\ref{fig:state}) by deleting $w_3$ and $w_4$, then $s_0 \bisim_1 s_0'$, since formulas of modal depth $\leq 1$ are evaluated only in the actual world $w_1$ and its immediate neighbours, but
%
%
$s_0 \otimes \alpha \bisim_1 s_0' \otimes \alpha$ does \emph{not} hold when $\alpha$ is the public announcement that ``$b$ doesn't know $a$'s number'' of Example~\ref{ex:update}. This action will remove the rightmost world no matter whether it is applied to $s_0$ or $s_0'$, so when applied to $s_0'$, it will remove $w_2$, and then e.g.\ $\lozenge_a \mathit{has}(a, 3)$ will be true in $s_0 \otimes \alpha$ but not in $s_0' \otimes \alpha$. So if an agent needs to be able to reason to depth 1 about the state resulting from executing $\alpha$, it needs to be able to reason to at least to depth 2 about the state before the action. Proposition~\ref{prop:b-bisim-product} then guarantees that reasoning to depth 2 about the former state suffices, since $\md(\alpha) =1$. This is key in depth-bounded epistemic planning, since 
planning is reasoning about possible future states, and 
depth-bounded reasoning to depth $b$ corresponds to $b$-bisimulation invariance.
\begin{proposition}\label{prop:b-bisim-dyn} 
    If two finite states $s$ and $s'$ are $b$-bisimilar, they agree on $\{\phi \in \Lang_\mathit{dyn} \mid \md(\phi) \leq b\}$.\footnote{Full proofs are available in the Appendix of the extended version available at \url{https://arxiv.org/abs/2406.01139}.}
\end{proposition}

\ifthenelse{\boolean{expandedVersion}}{
    \begin{proof}
        Suppose $s$ and $s'$ are $b$-bisimilar. We need to prove that they agree on every formula $\phi \in \Lang_\mathit{dyn}$ with $\md(\phi) \leq b$. 
        The proof is by induction on the structure of $\phi$. The cases where $\phi$ is $\psi \land \gamma$, $\neg \psi$ or $\Box_i \psi$ are as in the proof of Proposition~\ref{prop:b-bisim}. Only remaining is the case of $\phi = [\alpha]\psi$ with $\alpha = (A,e)$ and $\md([\alpha]\psi) \leq b$. Note that, by definition, $\md([\alpha]\psi) = \md(\alpha) + \md(\psi)$. Thus $\md(\alpha) \leq b$, implying that $s$ and $s'$ agree on $pre(e)$ (Proposition~\ref{prop:b-bisim}). If $pre(e)$ holds in neither, $\phi$ holds in both, and we are done.  If $pre(e)$ holds in both, we need to prove that $s \otimes \alpha$ and $s' \otimes \alpha$ agree on  $\psi$. Proposition~\ref{prop:b-bisim-product-sequence} gives us $s' \otimes \alpha \bisim_{b - \md(\alpha)} s \otimes \alpha$, and since $\md(\psi) = \md([\alpha]\psi) - \md(\alpha) \leq b - \md(\alpha)$, the induction hypothesis implies the required. 
    \end{proof}
}{
    \begin{proof}[Proof sketch]
        The proof is by induction on the structure of $\phi$, where the cases of the connectives and modalities in $\Lang$ are exactly as in the proof of Proposition~\ref{prop:b-bisim}, and the case of the dynamic modalities $[\alpha]\psi$ follows from Proposition~\ref{prop:b-bisim-product-sequence} and the definition of $\md([\alpha]\psi)$.
    \end{proof}
}

    \section{Rooted and Canonical $b$-Contractions}\label{sec:canonical-contractions}
A significant challenge for epistemic planners is to 
handle the rapid growth of the size of states following updates~\cite{journals/jancl/Bolander2011}, even when minimizing states using bisimulation contractions~\cite{conf/ijcai/Yu2013,conf/icaps/Fabiano2020,conf/kr/BolanderDH21}. Here we seek to model agents with a bound on their reasoning depth (a bound on the modal depth of the formulas they can reason with).  
We achieve this by using  $b$-bisimulation contractions, abbreviated \emph{$b$-contractions}. 
These were introduced to epistemic planning by   
\citeauthor{conf/ijcai/Yu2013}~(\citeyear{conf/ijcai/Yu2013}), 
defining the $b$-contraction $\contr{s}{b}$ of a  
state $s = ((W, R, L), w)$ 
as the quotient structure of $s$ wrt.\ $\bisim_b$, \ie, $\contr{s}{b} = ((W', R', L'), \class{w}{b})$ with $W' = \{\class{w}{b} \mid w \in W\}$, $R'_i = \{(\class{w}{b}, \class{v}{b}) \mid w R_i v\}$, and $L'(\class{w}{b}) = L(w)$.
We call this the \emph{standard $b$-contraction} of $s$.  Unfortunately, $\contr{s}{b}$ is not always minimal (\ie it does not have the smallest number of worlds and edges) among the states $b$-bisimilar to $s$, as the next example shows~\cite{conf/aiml/Bolander2024}.

\begin{figure}
    \centering
    \scalebox{0.9}{
\begin{tikzpicture}[>=latex,auto,style={font=\sffamily\footnotesize}]
    \node[dworld, label={below:$w_{b}{:}p$}, label={left:$s = \contr{s}{b} =$}]
                                                     (w0)  at (0,   0) {};
    \node[world,        label={below:$w_{b-1}{:}p$}] (w1)  at (1.2, 0) {};
    \node[world,        label={below:$w_1{:}p$}]     (w2)  at (2.4, 0) {};
    \node[world,        label={below:$w_0{:}p$}]     (w3)  at (3.6, 0) {};

    \path
        (w0) edge[->]        (w1)
        (w1) edge[->,dotted] (w2)
        (w2) edge[->]        (w3)
    ;
    
    \node[dworld, label={below:$w'_{b}{:}p$}, label={left:$\rootedContr{s}{b}{<}=$}] (v0) at (6.5, 0) {};

    \path
        (v0) edge[->, loop, out=50, in=130, looseness=14] (v0)
    ;
\end{tikzpicture}
}
    \caption{Standard ($\contr{s}{b}$) and rooted ($\rootedContr{s}{b}{<}$) $b$-contractions of $s$ (Definition \ref{def:rooted-b-contr}).
    As we have only one agent, we omit agent labels.
    }
    \label{fig:chain}
\end{figure}
\begin{example} 
    \label{ex:chain}
    Consider the chain state $s$ in Figure \ref{fig:chain}, left.
    Since $p$ is true in all worlds, and the length of the chain is $b$, a minimal state $b$-bisimilar to $s$ is a singleton state with a loop (Figure \ref{fig:chain}, right). This is because the loop state preserves all formulas up to depth $b$, cf.\ Proposition~\ref{prop:b-bisim}. However, the standard $b$-contraction of $s$ is simply $s$ itself, as no two worlds of $s$ satisfy the same formulas up to depth $b$. 
\end{example}
\noindent In the following, $b \geq 0$ denotes a constant (a \emph{bound}), $s = (M, w_d)$ a state with $M = (W, R, L)$, $<$ a total order on $W$.
\myparagraph{Rooted $b$-contractions}
Recently, \citeauthor{conf/aiml/Bolander2024}~\shortcite{conf/aiml/Bolander2024} developed a novel type of $b$-contractions, called \emph{rooted $b$-contractions}, that guarantee minimality of contracted states. We briefly state the definition and minimality result from that paper (Definitions~\ref{def:bound}--\ref{def:rooted-b-contr} and Theorem~\ref{th:b-contr} below), and refer the reader to the paper for further details. 

\begin{definition}[Depth and Bound]\label{def:bound}
        \label{def:depth}
        The \emph{depth} $\depth{w}$ of a world $w \in W$ is the length of the shortest path from $w_d$ to $w$ ($\infty$ if no such path exists).
        The \emph{bound} of $w$ is $\bound{w} = b - \depth{w}$. 
    \end{definition}
\begin{definition}
    \label{def:representative}\label{def:max-repr}\label{def:repr-class}
    \label{def:least-repr}
    Let $x,y \in W$ with $b(x), b(y) \geq 0$. We say that $x$ \emph{represents} $y$, denoted $x \succeq y$, if $\bound{x} \geq \bound{y}$ and $x \bisim_{\bound{y}} y$. If furthermore $\bound{x} > \bound{y}$, we say that $x$ \emph{strictly represents} $y$, denoted by $x \succ y$. 
    The set of \emph{maximal representatives} of $W$ is the set $\maxRepr{W} = \{ x \in W \mid \bound{x} \geq 0 \textnormal{ and } \neg \exists y (y \succ x) \}$.
    We also let $\strictMaxRepr{W} = \{ x \in \maxRepr{W} \mid \bound{x} > 0 \}$.
    For $h \leq b$,
    the \emph{least $h$-representative} of $w$ 
    is the world 
    $\leastRepr{w}{h} = \min_< \{ v \in \maxRepr{W} \mid v \bisim_h w \}$. 
    The \emph{representative class} of $w \in W$ is $\class{w}{\bound{w}}$, compactly denoted $\reprClass{w}$.
\end{definition}


\begin{definition}
    \label{def:rooted-b-contr}
    The \emph{rooted $b$-contraction} of $s$ 
    is $\rootedContr{s}{b}{<} = ((W', R', L'), \reprClass{w_d})$, where: \\[0.2mm]
     \indent $W'    = \{\reprClass{x} \mid x \in \maxRepr{W}\}$ \\[0.2mm]
     \indent $R'_i  = \{(\reprClass{x}, \reprClass{\leastRepr{y}{\bound{x}{-}1}}) \mid x \in \strictMaxRepr{W} \text{ and } x R_i y$\} \\[0.2mm]
    \indent
       $L'(\reprClass{x}) = L(x)$, for all $\reprClass{x} \in W'$.
\end{definition}


\begin{theorem}
    [\citeauthor{conf/aiml/Bolander2024}~\citeyear{conf/aiml/Bolander2024}]
    \label{th:b-contr}
$\rootedContr{s}{b}{<}$ is a minimal state $b$-bisimilar to $s$, \ie, it has a minimal number of worlds and edges among all states $b$-bisimilar to $s$. 
\end{theorem}



\myparagraph{Canonical $b$-contractions}
   Below, we devise a novel planning algorithm using \emph{graph search}~\cite{russell2010artificial} to find plans. As we want to limit the reasoning depth of the planning agent to some bound $b$, we can replace each state $s$ in the search tree by $\rootedContr{s}{b}{<}$, since    
   Proposition~\ref{prop:b-bisim} and Theorem~\ref{th:b-contr} together give us that $\rootedContr{s}{b}{<}$ is a minimal state preserving the truth of formulas up to depth $b$ in $s$. 
   Due to the minimality of rooted $b$-contractions, this would guarantee minimality of the representation of each state in the search tree. 
   However, when using graph search for planning, we also need to be able to efficiently check whether a computed state is already in the search tree. In our case, this would amount to checking $b$-bisimilarity between the computed state and existing states in the search tree, which is costly.     
   \citeauthor{conf/kr/BolanderDH21}~\shortcite{conf/kr/BolanderDH21}
   found a solution to this problem in the case of standard bisimulations by 
   devising a so-called \emph{ordered partition refinement} algorithm for computing bisimulation contractions guaranteeing two bisimilar states to have \emph{identical} contractions, hence reducing bisimilarity checks to identity checks. 
   Inspired by this work, we 
  define a notion of \emph{canonical $b$-contractions} guaranteeing two $b$-bisimilar states to have identical contractions, in turn providing fast state comparisons for our planning algorithm. 
Before giving the definition, we illustrate the issue of rooted $b$-contractions with an example. 

    \begin{figure*} 
        \centering
        \hfill
        \begin{minipage}[b]{0.2\textwidth}
            \begin{tikzpicture}[>=stealth',node distance=1.7em and 1.5em,style={font=\sffamily\footnotesize}, baseline=(wd),label distance=0.25em]
    \begin{scope}[inner sep=0.5pt]
        \node[dworld,                    label={right:$ w_0{:}p$}] (wd) {};
        \node[world,  below left =of wd, label={left :$ w_1{:}q$}] (w1) {};
        \node[world,  below right=of wd, label={right:$~w_2{:}q$}] (w2) {};
        \node[world,  below=of w1,       label={left :$ w_3{:}r$}] (w3) {};
        \node[world,  below=of w2,       label={right:$ w_4{:}r$}] (w4) {};
    \end{scope}

    \path
        (wd) edge[-latex]  (w1)
        (wd) edge[-latex]  (w2)
        (w1) edge[latex-latex] (w3)
        (w2) edge[-latex]  (w4)
        (w3) edge[-latex]  (w2)
        (w2) edge[-latex, loop, out=110, in=30, looseness=14] (w2)
    ;
 \end{tikzpicture}
            \caption{State $s$.}\label{fig:ex-model-1}
        \end{minipage}
        \hfill
        \begin{minipage}[b]{0.2\textwidth}
            \begin{tikzpicture}[>=stealth',node distance=1.7em and 1.5em,style={font=\sffamily\footnotesize}, baseline=(wd),label distance=0.25em]
    \begin{scope}[inner sep=0.5pt]
        \node[dworld,                    label={right:$ w_0{:}p$}] (wd) {};
        \node[world,  below left =of wd, label={left :$ w_2{:}q$}] (w1) {};
        \node[world,  below right=of wd, label={right:$~w_1{:}q$}] (w2) {};
        \node[world,  below=of w1,       label={left :$ w_3{:}r$}] (w3) {};
        \node[world,  below=of w2,       label={right:$ w_4{:}r$}] (w4) {};
    \end{scope}

    \path
        (wd) edge[-latex]  (w1)
        (wd) edge[-latex]  (w2)
        (w1) edge[latex-latex] (w3)
        (w2) edge[-latex]  (w4)
        (w3) edge[-latex]  (w2)
        (w2) edge[-latex, loop, out=110, in=30, looseness=14] (w2)
    ;
 \end{tikzpicture}
            \caption{State $t$.}\label{fig:ex-model-2}
        \end{minipage} 
        \hfill
        \begin{minipage}[b]{0.2\textwidth} 
            \begin{tikzpicture}[>=latex,node distance=1.7em and 1.5em,style={font=\sffamily\footnotesize}, baseline=(wd)]
    \begin{scope}[inner sep=0.5pt]
        \node[dworld,                    label={right:$~ \class{w_0}{3}{:}p$}] (wd) {};
        \node[world,  below left =of wd, label={left :$  \class{w_1}{2}{:}q$}] (w1) {};
        \node[world,  below right=of wd, label={right:$~~\class{w_2}{2}{:}q$}] (w2) {};
        \node[world,  below=of w1,       label={left :$  \class{w_3}{1}{:}r$}] (w3) {};
        \node[world,  below=of w2,       label={right:$  \class{w_4}{1}{:}r$}] (w4) {};
    \end{scope}

    \path
        (wd) edge[->]  (w1)
        (wd) edge[->]  (w2)
        (w1) edge[<->] (w3)
        (w2) edge[->]  (w4)
        (w2) edge[->, loop, out=110, in=30, looseness=14] (w2)
    ;
\end{tikzpicture}
            \caption{State $\rootedContr{s}{3}{<}$.}\label{fig:non-idempotent-1}
        \end{minipage} 
        \hfill
        \begin{minipage}[b]{0.2\textwidth}
            \begin{tikzpicture}[>=latex,node distance=1.7em and 1.5em,style={font=\sffamily\footnotesize}, baseline=(wd)]
    \begin{scope}[inner sep=0.5pt]
        \node[dworld,                    label={right:$~ \class{w_0}{3}{:}p$}] (wd) {};
        \node[world,  below left =of wd, label={left :$  \class{w_2}{2}{:}q$}] (w1) {};
        \node[world,  below right=of wd, label={right:$~~\class{w_1}{2}{:}q$}] (w2) {};
        \node[world,  below=of w1,       label={left :$  \class{w_3}{1}{:}r$}] (w3) {};
        \node[world,  below=of w2,       label={right:$  \class{w_4}{1}{:}r$}] (w4) {};
    \end{scope}

    \path
        (wd) edge[->]  (w1)
        (wd) edge[->]  (w2)
        (w1) edge[->]  (w3)
        (w2) edge[->]  (w4)
        (w3) edge[->]  (w2)
        (w2) edge[->, loop, out=110, in=30, looseness=14] (w2)
    ;
\end{tikzpicture}
            \caption{State $\rootedContr{t}{3}{<}$.}\label{fig:non-idempotent-2}
        \end{minipage}
        \hfill\hfill
        \label{fig:non-idempotent}
    \end{figure*}
    \begin{example}\label{ex:non-isomorphic}
        Let $s$ be the state in Figure \ref{fig:ex-model-1},  
        let $t$ be a state only differing from $s$ by swapping the names of $w_1$ and $w_2$ (Figure~\ref{fig:ex-model-2}),
        let the total order $<$ on $W$ (the world set of $s$ and $t$) be given by $w_i < w_j$ iff $i < j$, and let $b = 3$.
        Then $b(w_0) = 3$,  $b(w_1) = b(w_2) = 2$, and $b(w_3) = b(w_4) = 1$.
        Since in both states no two worlds are $1$-bisimilar, all worlds of $W$ are maximal representatives,
        so $\rootedContr{s}{3}{<}$ and $\rootedContr{t}{3}{<}$ (Figures~\ref{fig:non-idempotent-1} and \ref{fig:non-idempotent-2}) will contain a world $\reprClass{w} = \class{w}{b(w)}$ for each $w \in W$.
        Note that both $3$-contracted states only differ from their original model by the deletion of a different ``unnecessary'' edge (as this depends on $<$).
        Hence, despite $s$ and $t$ being isomorphic, $\rootedContr{s}{3}{<}$ and $\rootedContr{t}{3}{<}$ are not!
        This shows that rooted $b$-contractions do not ensure unique representatives of classes of $b$-bisimilar states. 
   
    \end{example}

    \begin{definition}\label{def:h-signature}
        The \emph{$h$-signature} $\Sign{w}{h}$ of a world $w$ is the pair $(L(w), \Sigma_h(w))$, where the function $\Sigma_h(w)$ maps to each agent $i$ a set $\SignMap{w}{i}{h} = \{\Sign{v}{h-1}\ |\ w R_i v\}$ of $(h{-}1)$-signatures, if $h > 0$, and $\varnothing$ otherwise.
        %
        We call $\Sign{w}{\bound{w}}$ the \emph{representative signature} of $w$, compactly denoted $\reprSign{w}$.
    \end{definition}
    In the $h$-signature of $w$,
    the label $L(w)$ describes what atoms hold at $w$ and an $(h{-}1)$-signature $\Sign{v}{h-1} \in \SignMap{w}{i}{h}$ represents agent $i$'s knowledge/beliefs at $v$ up to depth $h{-}1$.
    All $(h{-}1)$-signatures map to each agent a set of $(h{-}2)$-signatures, and so on until $0$-signatures, which only contain worlds labels. 
    Hence, the $h$-signature of $w$ captures the model structure up to depth $h$ from $w$.
    This will allow us to prove that two worlds are $h$-bisimilar iff they have the same $h$-signature 
    (Lemma \ref{th:same-signature}).
    In what follows, $h \geq 0$ denotes a constant, and $M = (W, R, L)$ and $M' = (W', R', L')$ two models with $x \in W$ and $x' \in W'$.

    \ifthenelse{\boolean{expandedVersion}}{
        \begin{lemma}\label{lem:h-signature}
            If $\Sign{x}{h+1} = \Sign{x'}{h+1}$, then $\Sign{x}{h} = \Sign{x'}{h}$.
        \end{lemma}
        \begin{proof}
            By induction on $h$. Base case ($h=0$):
            from $\Sign{x}{1} = \Sign{x'}{1}$ we get $L(x) = L'(x')$ and, thus, $\Sign{x}{0} = \Sign{x'}{0}$. Induction step ($h >0$): 
            For all $y \in W, y' \in W'$, suppose $\Sign{y}{h} = \Sign{y'}{h}$ implies $\Sign{y}{h-1} = \Sign{y'}{h-1}$.
            Let $\Sign{x}{h+1} = \Sign{x'}{h+1}$.
            Then $L(x) = L'(x')$, and it is only left 
            to prove that $\SignMap{x}{i}{h} = \SignMap{x'}{i}{h}$ for all $i \in \agentSet$.
            We only show $\SignMap{x}{i}{h} \subseteq \SignMap{x'}{i}{h}$, the other direction being symmetrical.
            Letting $z \in \SignMap{x}{i}{h}$, we need to show $z \in \SignMap{x'}{i}{h}$. Since $z \in \SignMap{x}{i}{h}$, we have $z = \sigma_{h-1}(y)$ for some $y$ with $x R_i y$. From $x R_i y$ we get $\Sign{y}{h} \in \SignMap{x}{i}{h+1}$, which, since $\sigma_{h+1}(x) = \sigma_{h+1}(x')$, implies $\Sign{y}{h} \in \SignMap{x'}{i}{h+1}$. From $\Sign{y}{h} \in \SignMap{x'}{i}{h+1}$ we get that there is a $y'$ s.t.\ $x' R'_i y'$ and $\Sign{y}{h} = \Sign{y'}{h}$. 
            By i.h.\ we now have $z = \Sign{y}{h-1} = \Sign{y'}{h-1}$, and since $x' R'_i y'$, we then also get $z \in \SignMap{x'}{i}{h}$, as required.
        \end{proof}
    }{
    }

    \begin{lemma}\label{th:same-signature}
        $x \bisim_h x'$ iff $\Sign{x}{h} = \Sign{x'}{h}$.
    \end{lemma}

    \ifthenelse{\boolean{expandedVersion}}{
        \begin{proof}
            ($\Rightarrow$)
            By induction on $h$. Base case ($h = 0$): $x \bisim_0 x'$ iff $L(x) = L'(x')$ iff $\Sign{x}{0} = \Sign{x'}{0}$. Induction step ($h > 0$):
            Suppose $y \bisim_{h-1} y'$ implies $\Sign{y}{h-1} = \Sign{y'}{h-1}$, for all $y \in W$, $y' \in W'$.
            Letting $x \bisim_h x'$, we prove that $\Sign{x}{h} = \Sign{x'}{h}$.
            From $x \bisim_h x'$, [atom] of Definition \ref{def:b-bisim} gives $L(x) = L'(x')$.
            Only left to show is that $\SignMap{x}{i}{h} = \SignMap{x'}{i}{h}$ for all $i \in \agentSet$.
            We only show $\SignMap{x}{i}{h} \subseteq \SignMap{x'}{i}{h}$, the other direction being symmetrical. Letting $z \in \SignMap{x}{i}{h}$, we need to show $z \in \SignMap{x'}{i}{h}$. From $z \in \SignMap{x}{i}{h}$ we get $z = \Sign{y}{h-1}$ for some $y$ with $x R_i y$. 
            Since $x \bisim_h x'$ and $x R_i y$, there exists $y' \in W'$ such that $x' R'_i y'$ and $y \bisim_{h-1} y'$.
            By induction hypothesis, this implies $\Sign{y}{h-1} = \Sign{y'}{h-1}$. We now have $z = \Sign{y}{h-1} = \Sign{y'}{h-1}$, and since $x' R'_i y'$, we then also have $z \in \SignMap{x'}{i}{h}$, as required.

            ($\Leftarrow$)
            Let $\Sign{x}{h} = \Sign{x'}{h}$. For all $g \leq h$, let $Z_g = \{(w, w') \in W \times W' \mid \Sign{w}{g} = \Sign{w'}{g}\}$.
            We show that $Z_h, \dots, Z_0$ is an $h$-bisimulation between $x$ and $x'$. Clearly $(x,x') \in Z_h$. 
            That $Z_h \subseteq \dots \subseteq Z_0$ follows from Lemma \ref{lem:h-signature}; and [atom] follows from the definition of $\sigma_0$.
            We only show [forth$_h$] ([back$_h$] being symmetric).
            Let $g < h$, $(w, w') \in Z_{g+1}$ and $w R_i v$.
            We need to find $v' \in W'$ such that $w' R'_i v'$ and $(v, v') \in Z_g$.
            From $w R_i v$ we get $\Sign{v}{g} \in \SignMap{w}{i}{g+1}$.
            Since $(w, w') \in Z_{g+1}$, we get $\Sign{w}{g+1} = \Sign{w'}{g+1}$, implying $\Sign{v}{g} \in \SignMap{w'}{i}{g+1}$. Thus there exists $v'$ with $w' R_i' v'$ and $\Sign{v}{g} = \Sign{v'}{g}$, implying $(v, v') \in Z_g$.
        \end{proof}
    }{
        \begin{proof}[Proof sketch]
            ($\Rightarrow$)
            Induction on $h$.
            For $h = 0$, $x \bisim_0 x'$ iff $L(x) = L'(x')$ iff $\Sign{x}{0} = \Sign{x'}{0}$.
            For $h > 0$, $x \bisim_h x'$ implies $L(x) = L'(x')$, and for all $i \in \agentSet$, the sets of $(h{-}1)$-signatures of $i$-accessible worlds are equal (i.h.\ and Definition~\ref{def:h-signature}).
            ($\Leftarrow$)
            Assume $\Sign{x}{h} = \Sign{x'}{h}$.
            Letting $Z_g = \{(w, w') \mid \Sign{w}{g} = \Sign{w'}{g}\}$ for $g \leq h$, $Z_h, \dots, Z_0$ is an $h$-bisimulation between $x$ and $x'$ (i.h.\ and Definition~\ref{def:h-signature}).
        \end{proof}
    }

    As $\atomSet$ and $\agentSet$ are fixed finite sets, we can assume a fixed total order on them.
    This induces a fixed total order $\lessdot$ on signatures as in \citeauthor{conf/kr/BolanderDH21}~(\citeyear{conf/kr/BolanderDH21}).
    From this, we define the notion of \emph{canonical signatures}.

    \begin{definition}\label{def:canonical-signature}
        The \emph{canonical signature to depth $h$} ($h \leq b$) of a world $w \in W$ is the representative signature $\canonicalSign{w}{h} = \min_\lessdot \{ \reprSign{v} \mid v \in \maxRepr{W} \text{ and } \Sign{v}{h} = \Sign{w}{h} \}$.
    \end{definition}


    \begin{definition}\label{def:canonical-b-contr}
        The \emph{canonical $b$-contraction} of $s$ is the state $\canonicalContr{s}{b} = ((W', R', L'), \reprSign{w_d})$, where:\\[0.2mm]
        \indent $W'    = \{\reprSign{x} \mid x \in \maxRepr{W}\}$\\[0.2mm]
        \indent $R'_i  = \{(\reprSign{x}, \canonicalSign{y}{b(x){-}1}) \mid x \in \strictMaxRepr{W} \text{ and } x R_i y\}$\\[0.2mm]
        \indent $L'(\reprSign{x}) = L(x)$, for all $\reprSign{x} \in W'$.
    \end{definition}
    Note that canonical $b$-contractions only differ from rooted $b$-contractions by the naming of worlds ($\reprSign{x}$ instead of $\reprClass{x}$) and by the choice of representative of the class of worlds that are $(b(x){-}1)$-bisimilar to $y$ in the definition of $R'_i$ (using $\canonicalSign{y}{b(x){-}1}$ instead of $\reprClass{\leastRepr{y}{\bound{x}{-}1}}$). This means that the proof of Theorem~\ref{th:b-contr} carries directly over to this setting: 
    \begin{theorem}\label{th:canonical-b-contr}
        $\canonicalContr{s}{b}$ is a minimal state $b$-bisimilar to $s$.
    \end{theorem}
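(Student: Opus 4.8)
The plan is to reduce everything to Theorem~\ref{th:b-contr}, exploiting that $\canonicalContr{s}{b}$ and $\rootedContr{s}{b}{<}$ are built on the same skeleton---the set $\maxRepr{W}$ of maximal representatives---and differ only in (i) how worlds are named ($\reprSign{x}$ instead of $\reprClass{x}$) and (ii) which member of a $(\bound{x}-1)$-bisimulation class is chosen as the target of an $i$-edge out of (the world for) $x$. I would split the claim into two parts, $\canonicalContr{s}{b} \bisim_b s$ and minimality, and crucially I would \emph{not} try to prove directly that signature-naming and class-naming induce the same partition of $\maxRepr{W}$. Instead I will establish only the easy inclusion and then let the known minimality of $\rootedContr{s}{b}{<}$ force equality by a size squeeze.

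For the size bound, first note that $R'_i$ is well defined: for $x \in \maxRepr{W}$ with $\bound{x} > 0$ and $x R_i y$ we have $\bound{y} \geq \bound{x}-1 \geq 0$, so $y$ has some maximal representative $z$, and any such $z$ satisfies $z \bisim_{\bound{x}-1} y$, whence $\Sign{z}{\bound{x}-1} = \Sign{y}{\bound{x}-1}$ by Theorem~\ref{th:same-signature}; thus the set defining $\canonicalSign{y}{\bound{x}-1}$ is nonempty and equals $\reprSign{v}$ for some $v \in \maxRepr{W}$, so the target is a genuine world of $W'$. Next I would show the map $\reprClass{x} \mapsto \reprSign{x}$ is well defined on the worlds of $\rootedContr{s}{b}{<}$: if $x, x' \in \maxRepr{W}$ and $\reprClass{x} = \reprClass{x'}$, i.e.\ $\class{x}{\bound{x}} = \class{x'}{\bound{x'}}$, then $x \bisim_{\bound{x}} x'$ and $x \bisim_{\bound{x'}} x'$; assuming w.l.o.g.\ $\bound{x} \leq \bound{x'}$, strictness $\bound{x} < \bound{x'}$ would give $x' \succ x$, contradicting $x \in \maxRepr{W}$, so $\bound{x} = \bound{x'}$ and, by Theorem~\ref{th:same-signature}, $\reprSign{x} = \Sign{x}{\bound{x}} = \Sign{x'}{\bound{x'}} = \reprSign{x'}$. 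Hence $\canonicalContr{s}{b}$ has at most as many worlds as $\rootedContr{s}{b}{<}$.

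For $b$-bisimilarity I would replay the corresponding half of the proof of Theorem~\ref{th:b-contr} verbatim: that argument uses only that the worlds of the contraction are indexed by $\maxRepr{W}$ and that every $i$-successor chosen for (the world of) $x$ names a state $(\bound{x}-1)$-bisimilar to the original successor $y$. Both hold here, since $\canonicalSign{y}{\bound{x}-1}$ names a world $(\bound{x}-1)$-bisimilar to $y$ exactly as $\reprClass{\leastRepr{y}{\bound{x}-1}}$ does, so the same witnessing relation yields $\canonicalContr{s}{b} \bisim_b s$. Combining the parts closes the argument: being $b$-bisimilar to $s$, $\canonicalContr{s}{b}$ has at least as many worlds as the minimal state $\rootedContr{s}{b}{<}$, while the size bound gives at most as many; hence it attains the minimal number of worlds and is itself minimal. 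As a by-product the map $\reprClass{x} \mapsto \reprSign{x}$ is forced to be a bijection, so the converse inclusion I deliberately avoided holds automatically.

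The main obstacle is the well-definedness step, and specifically the non-obvious fact that \emph{among maximal representatives equal representative classes force equal bounds}: without it, signatures taken at different levels could coincide (as they do, e.g., for dead-end worlds with the same label) and break the correspondence, and it is precisely the maximality condition that rules this out. A secondary point requiring care is verifying that the targets $\canonicalSign{y}{\bound{x}-1}$ always exist as worlds of $W'$; organising minimality as a size squeeze rather than a direct partition-equality proof is what lets me sidestep the genuinely delicate converse about signatures compared across differing depths.
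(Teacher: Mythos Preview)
Your proposal is correct and shares the paper's underlying idea—that $\canonicalContr{s}{b}$ differs from $\rootedContr{s}{b}{<}$ only in the naming of worlds and the particular representative chosen for each $(\bound{x}{-}1)$-bisimulation class, so that Theorem~\ref{th:b-contr} transfers. The paper leaves it at that, asserting that the proof of Theorem~\ref{th:b-contr} carries over directly to the new definition. You organise the transfer differently: you replay only the $b$-bisimilarity half of that proof and then invoke Theorem~\ref{th:b-contr} as a black box for minimality via a size squeeze (the well-defined surjection $\reprClass{x} \mapsto \reprSign{x}$ gives the upper bound on $|W'|$; minimality of $\rootedContr{s}{b}{<}$ among states $b$-bisimilar to $s$ gives the lower). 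This buys you something concrete: the converse you explicitly sidestep—that equal representative signatures among maximal representatives force equal representative classes—falls out for free instead of needing a separate argument. The well-definedness lemma you isolate (equal $\reprClass{\cdot}$ among maximal representatives forces equal bounds, hence equal $\reprSign{\cdot}$ via Theorem~\ref{th:same-signature}) is precisely the one nontrivial check hidden behind the paper's one-line remark, so your added care is justified.
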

%
%
\ifthenelse{\boolean{expandedVersion}}{
Furthermore, as we will now see, two states are $b$-bisimilar if and only if they have identical canonical $b$-contractions. First we need a couple of technical lemmas.  
    \begin{lemma}[\citeauthor{conf/aiml/Bolander2024}~\shortcite{conf/aiml/Bolander2024}]\label{lem:bound}
        If $x R_i y$, then $\bound{y} \geq \bound{x}-1$.
    \end{lemma}

    \begin{lemma}\label{lem:bound-bisim}
        Let $s \bisim_b s'$.
        For any world $x$ in $s$ with $\bound{x} \geq 0$ there exists an $x'$ in $s'$ such that $x \bisim_{\bound{x}} x'$, and vice versa.
    \end{lemma}

    \begin{proof}
        Let $s = (M, w_d)$ and $s' = (M', w'_d)$ with $M = (W, R, L)$ and $M' = (W', R', L')$.
        Since $\bound{x} \geq 0$, by Definition \ref{def:bound}, there exists a finite path from $w_d$ to $x$ of length $\depth{x} = b-\bound{x}$.
        Since $s \bisim_b s'$, by repeated applications of the [forth$_h$] condition in Definition \ref{def:b-bisim}, we get that there exists a path of length $b-\bound{x}$ from $w'_d$ to a world $x' \in W'$ such that $x \bisim_{\bound{x}} x'$, as required.
        The vice versa follows by starting from a world $x' \in W'$ with $\bound{x'} \geq 0$ and by applying [back$_h$] instead of [forth$_h$].
    \end{proof}



    \setcounter{theorem}{2}
    \begin{theorem}\label{th:canonical-contractions}
        $s \bisim_b t$ iff $\canonicalContr{s}{b} = \canonicalContr{t}{b}$.
    \end{theorem}
    \begin{proof}
        ($\Leftarrow$)
        Directly follows by Theorem~\ref{th:canonical-b-contr}.
        ($\Rightarrow$)
        Let $s = (M, w_d)$, $t = (\mathtt{M}, \mathtt{w_d})$, $\canonicalContr{s}{b} = (M^\star, \reprSign{w_d})$ and $\canonicalContr{t}{b} = (\mathtt{M}^\star, \reprSign{\mathtt{w_d}})$, with $M = (W, R, L)$, $\mathtt{M} = (\mathtt{W}, \mathtt{R}, \mathtt{L})$, $M^\star = (W^\star, R^\star, L^\star)$ and $\mathtt{M}^\star = (\mathtt{W}^\star, \mathtt{R}^\star, \mathtt{L}^\star)$.
        We need to show that
        \begin{inparaenum}
            \item\label{itm:canonical-contr-1} $\reprSign{w_d} = \reprSign{\mathtt{w_d}}$,
            \item\label{itm:canonical-contr-2} $W^\star = \mathtt{W}^\star$,
            \item\label{itm:canonical-contr-3} $R^\star = \mathtt{R}^\star$, and
            \item\label{itm:canonical-contr-4} $L^\star = \mathtt{L}^\star$.
        \end{inparaenum}

        \ref{itm:canonical-contr-1}.
        Since $s \bisim_b t$, by Definition \ref{def:b-bisim} we get $w_d \bisim_b \mathtt{w_d}$.
        Since by Definition \ref{def:bound} we have $\bound{w_d} = \bound{\mathtt{w_d}} = b$, Lemma \ref{th:same-signature} then gives $\reprSign{w_d} = \reprSign{\mathtt{w_d}}$.

        \ref{itm:canonical-contr-2}.
        We now only show that $W^\star \subseteq \mathtt{W}^\star$, the other direction being symmetrical.
        Let $x^\star \in W^\star$.
        By Definition \ref{def:canonical-b-contr}, there exists a world $x \in \maxRepr{W}$ such that $x^\star = \reprSign{x}$.
        We need to show that $\reprSign{x} \in \mathtt{W}^\star$.
        We reason as follows.
        If there exists an $\mathtt{x} \in \maxRepr{\mathtt{W}}$ such that $x \bisim_{\bound{x}} \mathtt{x}$ and $\bound{x} = \bound{\mathtt{x}}$, then by Lemma \ref{th:same-signature} and Definition \ref{def:canonical-b-contr}, we get $\reprSign{x} = \reprSign{\mathtt{x}} \in \mathtt{W}^\star$.
        We now show that such an $\mathtt{x}$ always exists.
        Since $x \in \maxRepr{W}$, Definition \ref{def:max-repr} gives $\bound{x} \geq 0$.
        Then, by Lemma \ref{lem:bound-bisim} there exists a $\mathtt{y} \in \mathtt{W}$ with $x \bisim_{\bound{x}} \mathtt{y}$.
        Since by the proof of Lemma \ref{lem:bound-bisim} there is a path from $\mathtt{w_d}$ to $\mathtt{y}$ of length $b-\bound{x}$, we get $\depth{\mathtt{y}} \leq b-\bound{x}$, \ie $\bound{\mathtt{y}} \geq \bound{x}$.
        Let $\mathtt{x} \in \maxRepr{\mathtt{W}}$ be a maximal representative of $\mathtt{y}$.
        Then by Definition \ref{def:max-repr} we have $\bound{\mathtt{x}} \geq \bound{\mathtt{y}}$ and $\mathtt{x} \bisim_{\bound{\mathtt{y}}} \mathtt{y}$, which, since $\bound{\mathtt{y}} \geq \bound{x}$, implies $\mathtt{x} \bisim_{\bound{x}} \mathtt{y}$.
        Together with $x \bisim_{\bound{x}} \mathtt{y}$, we also get $x \bisim_{\bound{x}} \mathtt{x}$.
        So far we have found a world $\mathtt{x} \in \maxRepr{\mathtt{W}}$ such that $x \bisim_{\bound{x}} \mathtt{x}$.
        All is left to show is that $\bound{x} = \bound{\mathtt{x}}$.
        Since $\bound{\mathtt{x}} \geq \bound{\mathtt{y}}$ and $\bound{\mathtt{y}} \geq \bound{x}$, we get $\bound{\mathtt{x}} \geq \bound{x}$.
        We now show by contradiction that $\bound{\mathtt{x}} \leq \bound{x}$.
        Assume $\bound{\mathtt{x}} > \bound{x}$.
        By using Lemma \ref{lem:bound-bisim} as above, we get that there exists a $y \in W$ such that $\mathtt{x} \bisim_{\bound{\mathtt{x}}} y$ and $\bound{y} \geq \bound{\mathtt{x}}$.
        From $\bound{y} \geq \bound{\mathtt{x}}$, $\bound{\mathtt{x}} > \bound{x}$, $x \bisim_{\bound{x}} \mathtt{x}$ and $\mathtt{x} \bisim_{\bound{\mathtt{x}}} y$, we get $\bound{y} > \bound{x}$ and $y \bisim_{\bound{x}} x$, implying $y \succ x$, which contradicts $x \in \maxRepr{W}$.
        Thus, $\bound{\mathtt{x}} \leq \bound{x}$, as required.

        \ref{itm:canonical-contr-3}.
        Let $i \in \agentSet$.
        We only show that $R_i^\star \subseteq \mathtt{R}_i^\star$, the other direction being symmetrical.
        So let $(\reprSign{x}, \canonicalSign{y}{\bound{x}-1}) \in R_i^\star$.
        We need to show that $(\reprSign{x}, \canonicalSign{y}{\bound{x}-1}) \in \mathtt{R}_i^\star$.
        By Definition \ref{def:canonical-b-contr}, we have $x \in \maxRepr{W}$, $x R_i y$ and $\bound{x} > 0$.
        From the proof of Item \ref{itm:canonical-contr-2}, there exists $\mathtt{x} \in \maxRepr{\mathtt{W}}$ such that $x \bisim_{\bound{x}} \mathtt{x}$ and $\bound{x} = \bound{\mathtt{x}}$.
        Since $x \bisim_{\bound{x}} \mathtt{x}$ and $x R_i y$, there exists $\mathtt{y} \in \mathtt{W}$ such that $\mathtt{x} \mathtt{R}_i \mathtt{y}$ and $y \bisim_{\bound{x}-1} \mathtt{y}$.
        Since $\mathtt{x} \in \maxRepr{\mathtt{W}}$, $\mathtt{x} \mathtt{R}_i \mathtt{y}$ and $\bound{\mathtt{x}} = \bound{x} > 0$, by Definition \ref{def:canonical-b-contr} we get $(\reprSign{\mathtt{x}}, \canonicalSign{\mathtt{y}}{\bound{\mathtt{x}}-1}) \in \mathtt{R}_i^\star$.
        Since $x \bisim_{\bound{x}} \mathtt{x}$ and $\bound{x} = \bound{\mathtt{x}}$, Lemma \ref{th:same-signature} gives $\reprSign{x} = \reprSign{\mathtt{x}}$.
        It remains to show that $\canonicalSign{y}{\bound{x}-1} = \canonicalSign{\mathtt{y}}{\bound{\mathtt{x}}-1}$.
        Let $S$ and $\mathtt{S}$ be the sets such that $\canonicalSign{y}{\bound{x}-1} = \min_\lessdot S$ and $\canonicalSign{\mathtt{y}}{\bound{\mathtt{x}}-1} = \min_\lessdot \mathtt{S}$, as in Definition \ref{def:canonical-signature}.
        Then, it suffices to prove $S = \mathtt{S}$.
        We only show $S \subseteq \mathtt{S}$, the other direction being symmetrical.
        Let $\reprSign{z} \in S$.
        Then, $z \in \maxRepr{W}$ and $\Sign{z}{\bound{x}-1} = \Sign{y}{\bound{x}-1}$ (Definition \ref{def:canonical-signature}).
        Since $z \in \maxRepr{W}$, from the proof of Item \ref{itm:canonical-contr-2}, there exists $\mathtt{z} \in \maxRepr{\mathtt{W}}$ such that $z \bisim_{\bound{z}} \mathtt{z}$ and $\bound{z} = \bound{\mathtt{z}}$.
        We now show by contradiction that $\bound{z} \geq \bound{x}-1$.
        Assume $\bound{z} < \bound{x}-1$.
        Since $x R_i y$, Lemma \ref{lem:bound} gives $\bound{y} \geq \bound{x}-1$ and, thus, $\bound{y} > \bound{z}$.
        From $\Sign{y}{\bound{x}-1} = \Sign{z}{\bound{x}-1}$, Lemma \ref{th:same-signature} gives $y \bisim_{\bound{x}-1} z$ and, since $\bound{z} < \bound{x}-1$, we get $y \bisim_{\bound{z}} z$.
        Together with $\bound{y} > \bound{z}$, this implies $y \succ z$, which contradicts $z \in \maxRepr{W}$.
        Thus, $\bound{z} \geq \bound{x}-1$, as required.
        From this, $\mathtt{z} \bisim_{\bound{z}} z$, $z \bisim_{\bound{x}-1} y$ and $y \bisim_{\bound{x}-1} \mathtt{y}$, we then get $\mathtt{z} \bisim_{\bound{x}-1} \mathtt{y}$.
        From Lemma \ref{th:same-signature}, this implies $\Sign{\mathtt{z}}{\bound{x}-1} = \Sign{\mathtt{y}}{\bound{x}-1}$.
        Moreover, since $\mathtt{z} \in \maxRepr{\mathtt{W}}$ and $\bound{x} = \bound{\mathtt{x}}$, by Definition \ref{def:canonical-signature} we get $\reprSign{\mathtt{z}} \in \mathtt{S}$.
        Finally, since $z \bisim_{\bound{z}} \mathtt{z}$ and $\bound{z} = \bound{\mathtt{z}}$, Lemma \ref{th:same-signature} gives $\reprSign{z} = \reprSign{\mathtt{z}}$ and, thus, $\reprSign{z} \in \mathtt{S}$, as required.

        \ref{itm:canonical-contr-4}.
        We need to show that for all $x^\star \in W^\star$ there exists $\mathtt{x}^\star \in \mathtt{W}^\star$ such that $L^\star(x^\star) = \mathtt{L}^\star(\mathtt{x}^\star)$, and vice versa.
        We only show the first direction, the other being symmetrical.
        Let $x^\star \in W^\star$.
        By Definition \ref{def:canonical-b-contr}, there exists a world $x \in \maxRepr{W}$ such that $x^\star = \reprSign{x}$.
        From the proof of Item \ref{itm:canonical-contr-2}, there exists $\reprSign{\mathtt{x}} \in \mathtt{W}^\star$ such that $\reprSign{x} = \reprSign{\mathtt{x}}$ and $\bound{x} = \bound{\mathtt{x}}$.
        By repeated applications of Lemma \ref{lem:h-signature}, we get $\Sign{x}{0} = \Sign{\mathtt{x}}{0}$.
        Since Lemma \ref{th:same-signature} gives $x \bisim_0 \mathtt{x}$, by Definition \ref{def:b-bisim}, we get $L(x) = \mathtt{L}(\mathtt{x})$.
        By Definition \ref{def:canonical-b-contr}, we have $L^\star(\reprSign{x}) = L(x)$ and $\mathtt{L}^\star(\reprSign{\mathtt{x}}) = \mathtt{L}(\mathtt{x})$, and we're done.
    \end{proof}
}{Furthermore, canonical $b$-contractions enjoy the following identity property:
    \begin{theorem}\label{th:canonical-contractions}
        $s \bisim_b t$ iff $\canonicalContr{s}{b} = \canonicalContr{t}{b}$.
    \end{theorem}

    \begin{proof}[Proof sketch]
        ($\Leftarrow$): by Theorem~\ref{th:canonical-b-contr}. ($\Rightarrow$): We use Lemma \ref{th:same-signature} and Definitions \ref{def:canonical-signature}--\ref{def:canonical-b-contr} to show that $\canonicalContr{s}{b}$ and $\canonicalContr{t}{b}$ have the same worlds and accessibility relations, and Definition \ref{def:canonical-b-contr} to show that they have the same labels.
    \end{proof}
}

    \begin{example}
        Theorem~\ref{th:canonical-contractions} ensures that $b$-bisimilar states have identical canonical $b$-contractions, unlike rooted contractions where this might not even hold for isomorphic states (cf.\ Example~\ref{ex:non-isomorphic}). We identified the problem to be that the choice of ``unnecessary'' edges to be deleted depended on the total order on $W$, \ie on the naming of worlds. Canonical $b$-contractions solve this problem, as the choice of ``representative edges'' is uniquely determined by the signatures, which are naming independent. 
        In particular, referring again to Example~\ref{ex:non-isomorphic}, since $b(w_3)=1$ in $s$ when $b=3$, the edge $(w_3,w_2)$ of $s$ is  
        replaced by
        $(\reprSign{w_3}, \canonicalSign{w_2}{0})$ in $\canonicalContr{s}{b}$, where $\canonicalSign{w_2}{0} = \min_\lessdot \{ \reprSign{v} \mid v \in \maxRepr{W}, \Sign{v}{0} = \Sign{w_2}{0} \}  =  \min_\lessdot \{ \reprSign{v} \mid v \in \maxRepr{W}, L(v) = L(w_2) = \{ q \} \}$,
        showing that the end node of the edge only depends on the fixed total order on signatures and the label of worlds.     
    \end{example}

    \section{Iterative Bound-Deepening Search}\label{sec:algorithms}
    Unlike more typical search strategies aimed at computing a plan that is as short as possible, our goal is to find 
    a plan with a modal depth as low as possible 
(Definition~\ref{def:action-modal-depth}).
    To this end, we impose a bound $\mathbf{b}$  on the reasoning depth of the planning agent (the modal depth of formulas it can reason about), which in turn allows us to use $\mathbf{b}$-contractions to reduce state sizes.
    To make the intution underlying our algorithm clearer, we first introduce some new helpful notions and corresponding results. 
  \begin{definition}\label{def:exact-approximate}
  A state $s'$ is called an \emph{exact representation} of a state $s$ if $s' \bisim s$. The state $s'$ is called an \emph{approximate representation} (or simply \emph{approximation}) of $s$ if $s' \bisim_b s$ for some $b \geq 0$. In this case, $s'$ is also called a \emph{$b$-approximation} of $s$. If $s'$ is an approximate, but not an exact, representation of $s$, we call it a  \emph{proper approximation}.
  \end{definition}
    \begin{proposition}\label{prop:exact-or-approx} 
    If $s$ is an exact representation of $s'$, then the two states agree on $\Lang_\mathit{dyn}$. If $s$ is a $b$-approximation of $s'$, then they agree on all $\Lang_\mathit{dyn}$-formulas of modal depth at most $b$. In particular,   
    given a state $s$, its canonical $b$-contraction $\canonicalContr{s}{b}$ 
      is a $b$-approximation of $s$ and agrees with it on all $\Lang_\mathit{dyn}$-formulas of modal depth at most $b$. 
\end{proposition}
\begin{proof}
The first statement follows directly from Proposition~\ref{prop:b-bisim}, as any formula of $\Lang_\mathit{dyn}$ is equivalent to a formula in $\Lang$~\cite{book/springer/vanDitmarsch2007}. The second statement follows from Proposition~\ref{prop:b-bisim-dyn}. The third statement follows from the second statement and  Theorem~\ref{th:canonical-b-contr}.
\end{proof}

    To check whether an action sequence $\pi$ is a solution to a planning task $(s_0,\actionSet,\phi_g)$, we need to check whether $s_0 
     \models [\pi]\phi_g$ (Definition~\ref{def:solution}). Proposition~\ref{prop:exact-or-approx} gives us that 
     this is equivalent to checking whether $\canonicalContr{s_0}{\md([\pi]\phi_g)} \models [\pi]\phi_g$. Thus to check whether a particular action sequence $\pi$ is a solution, we can always replace the true initial state $s_0$ by $\canonicalContr{s_0}{\md([\pi]\phi_g)}$, or by any other $\canonicalContr{s_0}{\mathbf{b}}$ with $\mathbf{b} \geq \md([\pi]\phi_g)$. 
     This will be exploited in our algorithm below. The algorithm starts out with an initial value of the bound $\mathbf{b}$ and attempts to compute a solution from the $\mathbf{b}$-contracted initial state $\canonicalContr{s_0}{\mathbf{b}}$. If this fails, $\mathbf{b}$ is incremented, and a new search is performed.

        \subsection{Description of the Planning Algorithm}\label{sec:algorithms-ibds}
            We now describe our planning Algorithm \ref{alg:bounded-search} called Iterative Bound-Deepening Search (\IBDS). The algorithm builds a search tree. Each node of the tree contains a contracted state $\canonicalContr{s}{b}$
            representing some true state $s$. Thus each node state is an approximate representation of the corresponding true state (Definition~\ref{def:exact-approximate} and Proposition~\ref{prop:exact-or-approx}).  
            Formally,  a \emph{node} of the search tree is a pair $n = (s, b)$, where $s$ is the state of $n$ (denoted $n.$state) and $b$ is the \emph{(depth) bound} (denoted $n$.bound). The bound $b$ is intended to guarantee that $n.$state is at least a $b$-approximation of the corresponding true state, and hence agrees with all formulas of the true state up to modal depth $b$ (Proposition~\ref{prop:exact-or-approx}). Later, in Lemma~\ref{lem:path}, we formally prove that this property holds for all nodes in the search tree. For now, we only provide informal arguments. 

            To compute $\canonicalContr{s}{b}$, we use \emph{bounded partition refinements} \cite{journal/logcom/Bolander2022}, a variation of standard partition refinements \cite{journals/siamcomp/Paige1987}.
            The algorithm manipulates via \emph{refinement} operations a partition $P_0$ of $W$ (initially calculated wrt.\ labels):
            for each element $B \in P_0$, called a \emph{block}, and for each relation $R_i$, a refinement produces a new partition $P_1$ by splitting the worlds of $B$ wrt.\ which blocks of $P_0$ they can access via $R_i$.
            Refinements are applied recursively until the sequence $[P_0, \dots, P_b]$ of partitions is computed.
            This gives the bounded bisimulation classes on $W$: the blocks of $P_h$ are the $h$-bisimulation classes of $W$ \cite[Prop.~7]{journal/logcom/Bolander2022}.
            Canonical $b$-contractions are then obtained from the partitions by following Definition \ref{def:canonical-b-contr}.

            \begin{algorithm}[t]
                \caption{Iterative Bound-Deepening Search}\label{alg:bounded-search}
                \begin{algorithmic}[1]
                    \Function{\IBDS}{$(s_0, \actionSet, \phi_g)$}\label{alg:line:ibds}
                        \For{$\mathbf{b} \gets \md(\phi_g)$ \textbf{to} $\infty$}\label{alg:line:it-bs-loop}
                            \State $\pi \gets$ \Call{BoundedSearch}{$(s_0, \actionSet, \phi_g), \mathbf{b}$}
                            \If{$\pi \neq$ \emph{fail}}
                                \Return $\pi$\label{alg:line:it-bs-end-loop}
                            \EndIf
                        \EndFor
                    \EndFunction
                    \smallskip

                    \Function{BoundedSearch}{$(s_0, \actionSet, \phi_g)$, $\mathbf{b}$}
                        \State $\mathit{frontier} \gets \langle$\Call{InitNode}{$s_0$, $\mathbf{b}$}$\rangle$\label{alg:line:frontier-init}
                        \State $\mathit{visited} \gets \varnothing$\label{alg:line:visited-init}

                        \While{$\neg\mathit{frontier}$.\textsc{empty}()}
                            \State $(s, b) \gets \mathit{frontier}$.\textsc{pop}()\label{alg:line:frontier-pop}
                            \State $\mathit{visited}$.\Call{push}{$s$}
                            \If{$s \models \phi_g$} \Return plan to $s$\label{alg:line:return-plan}
                            \EndIf
                            \ForAll{$\alpha \in \actionSet$ such that $b \geq \md(\alpha) + \md(\phi_g)$}\label{alg:line:bs-check}
                                \If{$\alpha$ is applicable in $s$}\label{alg:line:bs-loop}
                                    \State $n' \gets$ \Call{\UpdateNode}{$(s, b)$, $\alpha$}\label{alg:line:bs-update}
                                    \If{$n'$.state $\notin \mathit{visited}$}\label{alg:line:eq-check}
                                        $\mathit{frontier}$.\Call{push}{$n'$}
                                    \EndIf
                                \EndIf
                            \EndFor
                        \EndWhile

                        \State \Return \emph{fail}\label{alg:line:return-fail}
                    \EndFunction
                    \smallskip
                    \Function{InitNode}{$s$, $b$}\label{alg:initnode} 
                        \State \Return $\left(\canonicalContr{s}{b}, b\right)$
                    \EndFunction
                    \smallskip
                    \Function{\UpdateNode}{$(s, b)$, $\alpha$}
                        \State \Return \Call{InitNode}{$s \otimes \alpha$, $b{-}\md(\alpha)$}\label{alg:line:approx-update}
                    \EndFunction
                \end{algorithmic}
            \end{algorithm}

            \begin{algorithm}[t]
                \caption{Improved initialization and child generation}\label{alg:improved-functions}
                \begin{algorithmic}[1]
                    \Function{InitNode}{$s$, $b$, $\mathit{was\_bisim}$}\label{alg:line:init}
                        \State \Return $\left(\canonicalContr{s}{b}, b, (\canonicalContr{s}{b} \bisim s) \land \mathit{was\_bisim}\right)$\label{alg:line:end}
                    \EndFunction
                    \smallskip

                    \Function{\UpdateNode}{$(s, b, \mathit{is\_bisim})$, $\alpha$}\label{alg:line:update}
                        \If{$\mathit{is\_bisim}$}\label{alg:line:update-is-bisim}
                            \Return \Call{InitNode}{$s \otimes \alpha$, $b$, $\mathit{true}$}\label{alg:line:update-exact}
                        \Else\label{alg:line:update-is-not-bisim}
                            \Return \Call{InitNode}{$s \otimes \alpha$, $b{-}\md(\alpha)$, $\mathit{false}$}\label{alg:line:update-approx}
                        \EndIf
                    \EndFunction
                \end{algorithmic}
            \end{algorithm}
            
            In \IBDS\ (line \ref{alg:line:ibds}), the integer $\mathbf{b} \geq 0$ denotes the global maximum modal depth of the formulas that we evaluate.
            Initially, we let $\mathbf{b} = \md(\phi_g)$ (line~\ref{alg:line:it-bs-loop}). We then iteratively call the \textsc{BoundedSearch} algorithm over increasing values of $\mathbf{b}$ until a plan $\pi$ is found (lines~\ref{alg:line:it-bs-loop}-\ref{alg:line:it-bs-end-loop}).  
            \textsc{BoundedSearch} uses breadth-first search (BFS) starting from the initial node $n_0 = \textsc{InitNode}(s_0, \mathbf{b}) = (\canonicalContr{s_0}{\mathbf{b}}, \mathbf{b})$ (line~\ref{alg:line:frontier-init}). 
           In the first call to \textsc{BoundedSearch}, $\mathbf{b}$ has value $\md(\phi_g)$, implying that the state $\canonicalContr{s_0}{\mathbf{b}}$ of the root note $n_0$  at least $\md(\phi_g)$-approximates the true initial state $s_0$. This is sufficient to check whether the goal formula is satisfied in the true initial state $s_0$, since the root state $\canonicalContr{s_0}{\mathbf{b}}$ and $s_0$ will agree on $\phi_g$ (Proposition~\ref{prop:exact-or-approx}). 
           But we are not guaranteed that they will also agree on $[\pi] \phi_g$, for non-empty action sequences $\pi$.


            We keep track of the nodes to be expanded in the $\mathit{frontier}$ queue, 
            and of the states encountered during search in the $\mathit{visited}$ set.
            While $\mathit{frontier}$ contains a node $n = (s, b)$, we extract it from the queue, mark its state as visited and check whether it 
            satisfies the goal formula.
            If it does, we return the plan that led to $s$ (line~\ref{alg:line:return-plan}).
            Otherwise, we process $n$ by 
            generating child nodes $n'$ for all actions $\alpha$ with $b \geq \md(\alpha) + \md(\phi_g)$ that are applicable in $s$ (lines~\ref{alg:line:bs-check}-\ref{alg:line:bs-update}). Each such child node is initialized as $n' = \textsc{\UpdateNode}((s, b), \alpha) = \textsc{InitNode}(s \otimes \alpha, b{-}\md(\alpha)) = (\canonicalContr{s \otimes \alpha}{b {-} \md(\alpha)}, b{-}\md(\alpha))$ (lines~\ref{alg:initnode}-\ref{alg:line:approx-update}). If $n'$.state was not previously visited, we push $n'$ onto $\mathit{frontier}$ and continue the search (line~\ref{alg:line:eq-check}).
            Since $n'$.state is a canonically contracted state, Theorem \ref{th:canonical-contractions} implies that to verify if $n'$.state has already been visited, it suffices to check whether $n'$.state $\in \mathit{visited}$.            Consider now any added child node $n'$. Since we intended for the bound $b$ of a node to guarantee that it at least $b$-approximates the corresponding true state, we need to show that this is preserved when moving from $n$ to $n'$. 
            In other words, we need to guarantee that if $s$ is a $b$-approximation of some true state $t$, then $n'$.state is an $n'$.bound-approximation of $t \otimes \alpha$. If $s$ is a $b$-approximation of $t$, then by Proposition~\ref{prop:b-bisim-product}, $s \otimes \alpha$ is a $(b{-}\md(\alpha))$-approximation of $t \otimes \alpha$. Theorem~\ref{th:canonical-b-contr} then gives that also $n'$.state $= \canonicalContr{s \otimes \alpha}{b{-}\md(\alpha)}$ is a $(b{-}\md(\alpha))$-approximation of $t \otimes \alpha$, and as $n'$.bound $= b{-}\md(\alpha)$, this shows the required. Note furthermore that as $b \geq \md(\alpha) + \md(\phi_g)$, then $n'$.bound $\geq \md(\phi_g)$. Since $n'$.state is $n'$.bound-approximating the true child state $t \otimes \alpha$, this guarantees that $n'$.state agrees with the true child state on the goal formula, and hence that we can correctly verify whether the goal has been achieved after executing $\alpha$ by checking its truth-value in $n'$.state.

            Finally, if the frontier is empty and no plan was found,
            we return \emph{fail} and proceed to the next iteration of \IBDS.

            As seen, the \textsc{\IBDS} strategy attempts to compute plans with the lowest possible modal depth (lowest value of $\mathbf{b}$).    
            Moreover, by keeping track of node bounds we can reduce the size of visited states via bounded contractions.
            As we now show, this reduction can be further improved by finding lower bounds for contractions.

        \subsection{Improving Bounds for Contractions}\label{sec:algorithms-improvements}
            \begin{figure*}[t]
                \centering
                \input{img/ex-ibds/ex-ibds.tex}
                \caption{
                    Partial execution of \textsc{mixed-ibds} on the planning task $T$ of Example~\ref{ex:ibds}.
                    Gray worlds and edges denote the parts of a state that are removed due to bounded contractions (\eg the rightmost world of $s_0$ is not included in $\canonicalContr{s_0}{2}$).
                    For brevity, each world is labelled by a pair $(x, y)$, meaning that in that world agents $a$ and $b$ have numbers $x$ and $y$, respectively.
                    Dashed borders denote nodes that can not be expanded with any action, and double borders denote nodes that satisfy the goal.
                }
                \label{fig:ibds-execution}
            \end{figure*}
            We now show how to achieve tighter bounds for contractions, further improving the algorithm. As explained above, a state $s$ of the search tree is guaranteed to $b$-approximate the corresponding true state $t$.   
            Sometimes, however, it might be that $s$ is even an exact representation of $t$, \ie bisimilar to it.
            In other words, bounded contractions do not always yield proper approximations (Definition~\ref{def:exact-approximate}).
            We use this key observation to improve our algorithm as follows.
            We now let a node be a \emph{triple} $n = (s, b, \mathit{is\_bisim})$, where the $b$-contracted state $s = \canonicalContr{t}{b}$ ($t$ being the true state) and $b$ are as above, and $\mathit{is\_bisim}$ (denoted $n$.is\_bisim) is a boolean representing whether $\canonicalContr{t}{b} \bisim t$ holds, \ie whether the state of a node is an exact representation of the true state ($\mathit{is\_bisim}$ true) or not ($\mathit{is\_bisim}$ false).
            We exploit this new information to improve functions \textsc{InitNode} and \textsc{\UpdateNode} (Algorithm~\ref{alg:improved-functions}).
            \textsc{InitNode} now takes an extra boolean parameter $\mathit{was\_bisim}$ being true if all states of the antecedents of the node being initialized are exact representations, and it initializes a new node $n'$ together with its bisimilarity information (line~\ref{alg:line:end}).
            We then initialize the frontier with $\langle$\Call{InitNode}{$s_0$, $\mathbf{b}, \mathit{true}$}$\rangle$ in line \ref{alg:line:frontier-init} of \textsc{BoundedSearch}.
            \textsc{\UpdateNode} now generates a child node $n'$, from a node $n = (s, b, \mathit{is\_bisim})$ and an action $\alpha$, as follows. If $\mathit{is\_bisim}$ is true (line \ref{alg:line:update-is-bisim}),
            then the node state $s$ is bisimilar to the true state $t$, and hence also $s \otimes \alpha \bisim t \otimes \alpha$, by Proposition~\ref{prop:bisim-product}. Thus also the state of the child node is an exact representation of the corresponding true state, and we can hence preserve the current bound by letting $n'$.bound $= b$. If $\mathit{is\_bisim}$ is false (line \ref{alg:line:update-is-not-bisim}), then we only know that $s$ is a $b$-approximation of the true state and we proceed as in Algorithm~\ref{alg:bounded-search}.

            We can think of the new algorithm as having two different modes for generating child nodes:
            An \emph{exact mode} -- invoked if in the path leading to the new node all states are exact representations, and an \emph{approximate mode} -- when some of the states are approximations.
            The algorithm starts in exact mode and, while it runs in this mode, the bound is preserved, as we are guaranteed that no information is lost due to bounded contractions.
            As soon as we generate an approximate state, we enter into the approximate mode and continue the search as in Algorithm \ref{alg:bounded-search}.
            To avoid ambiguity, from now on we refer to Algorithm~\ref{alg:bounded-search} as \textsc{approx-ibds} and to the improved version as \textsc{mixed-ibds}.

            \textsc{mixed-ibds} improves \textsc{approx-ibds} in several ways.
            First, the number of iterations in \textsc{mixed-ibds} is reduced in general.
            Consider a path $n_0 \xrightarrow{\alpha_1} \cdots \xrightarrow{\alpha_l} n_l$ visited by \textsc{BoundedSearch}($T, \mathbf{b}$), and let $n_k = (s_k, b_k, \mu_k)$ for each $0 \leq k \leq l$.
            In \textsc{approx-ibds}, we have $b_k = \mathbf{b} - \sum_{h \leq k} \md(\alpha_h)$, as we subtract the modal depth of each applied action from the initial bound (line~\ref{alg:line:approx-update}).
            In \textsc{mixed-ibds} this does not necessarily happen for each action, so
            \textsc{mixed-ibds} will generally be able to expand more nodes and add more children without increasing the bound $\mathbf{b}$, hence generally require fewer iterations of \textsc{BoundedSearch}.   
            %
            Second, we improve bounds for contractions: As less iterations are needed, the global bound $\mathbf{b}$ is smaller, so we can take $b$-contractions with lower values of $b$, further reducing the size of states.
            Finally, we implemented an important optimization in \textsc{mixed-ibds} (omitted in the pseudocode for readability):
            We preserve all nodes $n$ of the search tree with $n.\text{is\_bisim} = \mathit{true}$ across different iterations of \textsc{mixed-ibds}, as in each iteration they would simply be recomputed as in the previous ones.
            %
            As we show in our experiments, this optimization avoids redundant computation and provides effective improvements wrt.\ \textsc{approx-ibds}.
            



            \begin{example}\label{ex:ibds}
                We now show the execution of \textnormal{\textsc{mixed-ibds}} on the planning task $T = (s_0, \{ \mathit{ann}_{a,b}, \mathit{ann}_{b,a} \}, \phi_g)$, where
                $s_0$ is from Figure~\ref{fig:state}, $\mathit{ann}_{i,j} = \mathit{ann}(\bigwedge_{0 \leq k \leq N} \neg \B{i} \mathit{has}(j, k))$ (Example~\ref{ex:action}) is the public announcement of the fact that agent $i$ does not know $j$'s number, and $\phi_g = \B{b} \B{a} \mathit{has}(b, 4)$.
                The algorithm invokes \textnormal{\textsc{BoundedSearch}} with increasing reasoning bounds $\mathbf{b}$, starting from $\mathbf{b} = \md(\phi_g) = 2$.
                
                \emph{$\mathbf{b} = 2$:}
                The algorithm performs a BFS starting from the initial node $n_0 = \textnormal{\textsc{InitNode}}(s_0, 2, \mathit{true})$.
                \textnormal{\textsc{InitNode}} first computes the canonical $2$-contraction of $s_0$, represented by $n_0$ in Figure~\ref{fig:ibds-execution}. 
                By Proposition~\ref{prop:bisim}, $s_0 \not\bisim \canonicalContr{s_0}{2}$ since they disagree on $\D{a}\D{b}\D{a}\mathit{has}(b,0)$, so we get $n_0 = (\canonicalContr{s_0}{2}, 2, \mathit{false})$.
                After extracting $n_0$ from the frontier (line~\ref{alg:line:frontier-pop}), since $\canonicalContr{s_0}{2} \not\models \phi_g$ and both actions have modal depth $1$, the condition $\mathbf{b} \geq \md(\alpha) + \md(\phi_g)$ (line~\ref{alg:line:bs-check}) fails, so no children are generated and the search continues.

                \emph{$\mathbf{b} = 3$:}
                With the same reasoning as above, we get $n'_0 = (\canonicalContr{s_0}{3}, 3, \mathit{false})$.
                Since $\canonicalContr{s_0}{3} \not\models \phi_g$ and both actions are applicable in $\canonicalContr{s_0}{3}$ satisfying $\mathbf{b} \geq \md(\alpha) + \md(\phi_g)$, we generate $n_1 = (\canonicalContr{s_1}{2}, 2, \mathit{false})$ and $n_1' = (\canonicalContr{s_1'}{2}, 2, \mathit{false})$, where $s_1 = \canonicalContr{s_0}{3} \otimes \mathit{ann}_{a,b}$ and $s_1' = \canonicalContr{s_0}{3} \otimes \mathit{ann}_{b,a}$.
                As shown in Example~\ref{ex:action}, public announcements delete worlds that do not satisfy the announced formula.
                Thus, $\mathit{ann}_{a,b}$ removes the leftmost world from $\canonicalContr{s_0}{3}$ (as in that world agent $a$ knows that $b$ has number $4$), and $\mathit{ann}_{b,a}$ deletes the rightmost one.
                Note that $\canonicalContr{s_1'}{2} = \canonicalContr{s_0}{2}$, so $n_0$ and $n'_1$ are identical; in Figure~\ref{fig:ibds-execution} we merge them for clarity, despite being separate nodes.
                Both $n_1$ and $n_1'$ have bound $2$, which is insufficient to proceed (as previously shown), so the search advances to the next iteration.

                \emph{$\mathbf{b} = 4$:}
                Due to space constraints, we only describe the path of the search tree induced by the action sequence $\pi = \mathit{ann}_{b,a}, \mathit{ann}_{a,b}, \mathit{ann}_{b,a}$ (shown in Figure~\ref{fig:ibds-execution}).
                Since $\canonicalContr{s_0}{4}$ (Figure~\ref{fig:ibds-execution}) and $s_0$ are isomorphic, they are also bisimilar, so the initial node of the path is $n''_0 = (\canonicalContr{s_0}{4}, 4, \mathit{true})$.
                Following the computation of the path, we can show that the reasoning bound is sufficiently high to visit all nodes (updates are computed as in the previous iteration).
                It can also be shown that no other previously visited path leads to a goal state.
                Thus, node $n''_3$ is going to be eventually visited, and since $n''_3$.state $\models \phi_g$ the action sequence $\pi$ is returned.
            \end{example}

\section{Soundness, Completeness and Complexity}
In the following, $T = (s_0,\actionSet,\phi_g)$ is a planning task and $\mathbf{b} \geq \md(\phi_g)$ a constant.
For a sequence $\pi = \alpha_1, \dots, \alpha_l$ of actions, $\graphPath{\pi}$ denotes a path
$n_0 \xrightarrow{\alpha_1} \cdots \xrightarrow{\alpha_l} n_l$
in the graph visited by \textsc{BoundedSearch}($T, \mathbf{b}$).
\begin{lemma}\label{lem:path}
    Let $n_k = (t_k, b_k, \mu_k)$ be the last node of $\graphPath{\pi}$, where $\pi = \alpha_1, \dots, \alpha_k$.
    Then:
    \begin{compactenum}[(1)]
        \item\label{itm:bisim_true} If $\mu_k = \mathit{true}$, then $t_k \bisim s_0 \otimes \pi$;
        \item\label{itm:bisim_false} If $\mu_k = \mathit{false}$, then $t_k \bisim_{b_k} s_0 \otimes \pi$;
        \item\label{itm:inequalities} $b_k \geq \mathbf{b} - \Sigma_{i \leq k} \md(\alpha_i)$ and $b_k \geq \md(\phi_g)$.
    \end{compactenum}
\end{lemma}

\ifthenelse{\boolean{expandedVersion}}{
\begin{proof}
    By induction on $k$.
    Base case ($k=0$): We have $n_0 = \textsc{InitNode}(s_0,\mathbf{b},true)= (t_0, \mathbf{b}, \mu_0)$, $t_0 = \canonicalContr{s_0}{\mathbf{b}}$, $\mu_k = (t_0 \bisim s_0)$.
    Thus $t_0 \bisim_{\mathbf{b}} s_0$ (Thm.~\ref{th:canonical-b-contr}), proving (\ref{itm:bisim_false}).
    By def.\ of $\mu_0$, we get (\ref{itm:bisim_true}).
    Since we assumed $\mathbf{b} \geq \md(\phi_g)$, we get (\ref{itm:inequalities}).
    Induction step ($k > 0$):
    Let $n_k = \textsc{ChildNode}(n_{k-1},\alpha_k)$ and $n_{k-1} = (t_{k-1}, b_{k-1}, \mu_{k-1})$.
    Then:
    
    \emph{(i)} If $\mu_{k-1} = \mathit{true}$:
    Letting $s_k = t_{k-1} \otimes \alpha_k$ and $b_k = b_{k-1}$, we get $n_k = \textsc{InitNode}(s_k, b_k, \mathit{true}) = (t_k, b_k, \mu_k)$ where $t_k = \canonicalContr{s_k}{b_k}$ and $\mu_k = (t_k \bisim s_k)$ (Algorithm~\ref{alg:improved-functions}, line \ref{alg:line:update-exact}).
    Since $\mu_{k-1} = \mathit{true}$, i.h.\ gives $t_{k-1} \bisim s_0 \otimes \pi_{\leq k-1}$,
    which by Proposition~\ref{prop:bisim-product} implies $s_k = t_{k-1} \otimes \alpha_k \bisim s_0 \otimes \pi_{\leq k}$.
    If $\mu_k = \mathit{true}$ (\ie $t_k \bisim s_k$) we get $t_k \bisim s_k \bisim s_0 \otimes \pi_{\leq k}$, proving (\ref{itm:bisim_true}).
    Otherwise, since $t_k = \canonicalContr{s_k}{b_k}$ we get $t_k \bisim_{b_k} s_k \bisim s_0 \otimes \pi_{\leq k}$, proving (\ref{itm:bisim_false}).
    As $b_k = b_{k-1}$, i.h.\ immediately gives us (\ref{itm:inequalities}).

    \emph{(ii)} If $\mu_{k-1} = \mathit{false}$:
    Letting $s_k = t_{k-1} \otimes \alpha_k$ and $b_k = b_{k-1} {-} \md(\alpha_k)$, we get $n_k = \textsc{InitNode}(s_k, b_k, \mathit{false}) = (t_k, b_k, \mathit{false})$, where $t_k = \canonicalContr{s_k}{b_k}$ (Algorithm~\ref{alg:improved-functions}, line \ref{alg:line:update-approx}).
    Since $\mu_{k-1} = \mathit{false}$, (\ref{itm:bisim_true}) trivially holds.
    Moreover, i.h.\ gives $t_{k-1} \bisim_{b_{k-1}} s_0 \otimes \pi_{\leq k-1}$.
    Since $b_k = b_{k-1} {-} \md(\alpha_k)$, Proposition~\ref{prop:b-bisim-product} then gives $s_k = t_{k-1} \otimes \alpha_k \bisim_{b_k} s_0 \otimes \pi_{\leq k}$,
    and hence, since $t_k = \canonicalContr{s_k}{b_k}$, $t_k \bisim_{b_k} s_k \bisim_{b_k} s_0 \otimes \pi_{\leq k}$, proving (\ref{itm:bisim_false}).
    Since $n_k$ was generated from $n_{k-1}$ in line \ref{alg:line:bs-update} of the algorithm, $n_{k-1}$ satisfied the condition in line \ref{alg:line:bs-check}, so $b_{k-1} \geq md(\alpha_k) + md(\phi_g)$.
    Hence we get $b_k = b_{k-1} - md(\alpha_k) \geq md(\phi_g)$.
    As i.h.\ gives us $b_{k-1} \geq b_0 - \Sigma_{i<k} \md(\alpha_i)$, we get $b_k = b_{k-1} {-} \md(\alpha_k) \geq b_0 - \Sigma_{i \leq k} \md(\alpha_i)$, proving (\ref{itm:inequalities}).
\end{proof}
}{
\begin{proof}[Proof sketch]
    By induction on $k$.
    Base case ($k = 0$):
    Immediate by construction, Proposition~\ref{prop:exact-or-approx}, and $\mathbf{b} \geq \md(\phi_g)$.
    Induction ($k > 0$):
    Assume the properties for $n_{k-1}$.
    If $\mu_{k-1}$ is $\mathit{true}$, Proposition~\ref{prop:bisim-product} gives (\ref{itm:bisim_true}-\ref{itm:bisim_false}), and the bound is unchanged.
    Otherwise, (\ref{itm:bisim_true}) is trivial, (\ref{itm:bisim_false}) follows by Proposition~\ref{prop:b-bisim-product}, and the bound decreases by $\md(\alpha_k)$, preserving (\ref{itm:inequalities}).
\end{proof}
}

\begin{theorem}[Soundness]\label{th:soundness}
    If \textnormal{\textsc{BoundedSearch}}$(T, \mathbf{b})$ returns an action sequence $\pi$, then $\pi$ is a solution to $T$.
\end{theorem} 
\begin{proof}
    
        



    Let $n_k = (t_k, b_k, \mu_k)$ be the last node of $\graphPath{\pi}$.
    From line \ref{alg:line:return-plan} of the algorithm, we get $t_k \models \phi_g$.
    Since $t_k \bisim_{b_k} s_0 \otimes \pi$ and $b_k \geq \md(\phi_g)$ (by Lemma \ref{lem:path}), Proposition~\ref{prop:b-bisim} gives us that $t_k$ and $s_0 \otimes \pi$ agree on formulas up to depth $\md(\phi_g)$, and hence $s_0 \otimes \pi \models \phi_g$. Thus, $ \pi$ is a solution to $T$.
   \end{proof}

   We now present the parameters used for our completeness and complexity results, originally introduced in the context of epistemic plan verification~\cite{vandepol2018parameterized,journal/logcom/Bolander2022}.
   We let $\mathsf{a} = |\agentSet|$, $\mathsf{p} = |\atomSet|$,
   $\mathsf{c} = \max \{ \md(\alpha) \mid \alpha \in \actionSet \}$, $\mathsf{o} = \md(\phi_g)$, and 
   $\mathsf{u}$ be the maximal allowed solution length. We use $|T|$ for the size of $T$, \ie the sum of the sizes of $s_0$, $\actionSet$ and $\phi_g$.
\begin{theorem}[Completeness]\label{th:completeness}
   If $T$ has a solution of length $\mathsf{u}$, then  \textnormal{\textsc{BoundedSearch}}$(T,\mathsf{c} \mathsf{u} + \mathsf{o})$ will find a solution to it.
\end{theorem}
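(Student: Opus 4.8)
The plan is to exhibit, inside the search tree built by $\textsc{BoundedTreeSearch}(T,b_0)$ with $b_0 = \mathsf{c}\mathsf{u}+\mathsf{o}$, the path that mirrors a given solution, and to show this path is never pruned, so that its final node passes the goal test. I would fix a solution $\alpha_0,\dots,\alpha_{\ell-1}$ of $T$ with $\ell \le \mathsf{u}$, and write $s_k = s_0\otimes\alpha_0\otimes\cdots\otimes\alpha_{k-1}$ for the \emph{real} intermediate states, so that $\alpha_k$ is applicable in $s_k$ for every $k<\ell$ and $s_\ell \models \varphi_g$. Recall $\md(\alpha_k)\le \mathsf{c}$ for all $k$ and $\md(\varphi_g)=\mathsf{o}$.

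The core step is an induction on $k$ showing that the tree contains a path $n_0 \xrightarrow{\alpha_0} \cdots \xrightarrow{\alpha_{k-1}} n_k$, i.e.\ that each $\textsc{ChildNode}$ call along it succeeds. The bisimulation and bound properties of these nodes come for free: invariants 1 and 2 established in the proof of soundness apply to \emph{any} path of the tree, so for the prefix up to $n_k$ they yield $n_k.\text{state} \bisim_{n_k.\text{bound}} s_k$ together with $n_k.\text{bound} \ge b_0 - \sum_{i<k}\md(\alpha_i) \ge b_0 - k\mathsf{c} = \mathsf{o} + \mathsf{c}(\mathsf{u}-k)$. The inductive step (for $k<\ell$) then reduces to checking the three gates on the way from $n_k$ to $n_{k+1}$. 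Since $k<\ell\le\mathsf{u}$ gives $n_k.\text{bound} \ge \mathsf{o}+\mathsf{c} \ge \mathsf{c} \ge \md(\mathit{pre}(\alpha_k))$, Proposition~\ref{prop:b-bisim} makes $n_k.\text{state}$ and $s_k$ agree on $\mathit{pre}(\alpha_k)$; as $s_k \models \mathit{pre}(\alpha_k)$, also $n_k.\text{state}\models\mathit{pre}(\alpha_k)$, so $\alpha_k$ is applicable in $n_k.\text{state}$, and the same inequality makes the test $\md(\alpha_k)\le n_k.\text{bound}$ of line~\ref{alg:line:bs-check} succeed. Moreover $\textsc{ChildNode}(n_k,\alpha_k,\mathsf{o})$ does not return \emph{fail}: if $n_k.\text{is\_bisim}$ is true this is immediate, and otherwise $n_k.\text{bound}-\md(\alpha_k) \ge \mathsf{o}+\mathsf{c}(\mathsf{u}-k)-\mathsf{c} = \mathsf{o}+\mathsf{c}(\mathsf{u}-k-1) \ge \mathsf{o}$ (using $k+1\le\ell\le\mathsf{u}$), so the test of line~\ref{alg:line:update-is-good-bound} passes. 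Hence $n_{k+1}$ is generated and pushed onto the frontier.

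Instantiating the induction at $k=\ell$ yields a node $n_\ell$ in the tree with $n_\ell.\text{state}\bisim_{n_\ell.\text{bound}} s_\ell$ and $n_\ell.\text{bound}\ge\mathsf{o}=\md(\varphi_g)$. Since $s_\ell\models\varphi_g$, Proposition~\ref{prop:b-bisim} gives $n_\ell.\text{state}\models\varphi_g$, so the goal test at line~\ref{alg:line:return-plan} succeeds when $n_\ell$ is reached. I would close with the search discipline: each node has at most $|\actionSet|$ children, so every level of the tree is finite and the breadth-first frontier reaches depth $\ell$ after finitely many pops; the algorithm therefore pops a goal-satisfying node at depth at most $\ell$ (either $n_\ell$ or an earlier one) and returns a plan, which by the soundness theorem is a genuine solution of $T$.

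The main obstacle is conceptual rather than computational: preconditions (and, at the last step, the goal) must be evaluated in the \emph{approximate} states $n_k.\text{state}$ actually stored in the tree, not in the real states $s_k$. The whole argument hinges on the bound staying at least $\mathsf{c}$ all along the path, which is precisely what $b_0=\mathsf{c}\mathsf{u}+\mathsf{o}$ buys — it leaves $\mathsf{o}$ of slack at depth $\mathsf{u}$ and a full $\mathsf{c}$ of slack at every earlier depth — so that Proposition~\ref{prop:b-bisim} licenses transferring applicability from $s_k$ to $n_k.\text{state}$ and keeps the pruning test satisfied. A secondary subtlety worth stating explicitly is that path existence and the bisimulation/bound invariants appear mutually dependent; this is resolved cleanly by letting the already-proved soundness invariants govern whatever prefix the induction has built, and using the present induction only to certify that the next $\textsc{ChildNode}$ call is not pruned.
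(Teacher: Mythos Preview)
Your proposal is correct and follows essentially the same approach as the paper: induction on the path length, invoking the soundness invariants (properties 1 and 2) to ensure that applicability transfers via Proposition~\ref{prop:b-bisim} and that \textsc{ChildNode} never returns \emph{fail}, then a final application of Proposition~\ref{prop:b-bisim} for the goal test. Your write-up is in fact slightly cleaner than the paper's --- you use the correct terminal index $n_\ell$ (the paper writes $n_{l-1}$, an off-by-one slip), and you make the BFS finiteness/termination argument and the appeal to soundness for the returned plan explicit.
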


\ifthenelse{\boolean{expandedVersion}}{
\begin{proof}
    Suppose $T$ has a solution $\pi = \alpha_1,\dots,\alpha_l$ with $l \leq \mathsf{u}$.
    We first show by induction on $k$ that the search graph computed by the algorithm include $\graphPath{\pi_{\leq k}}$ for all $k \leq l$.
    The base case is trivial.
    Induction step: Suppose the algorithm has visited $\graphPath{\pi_{\leq k}}$, with last node $n_k = (t_k, b_k, \mu_k)$.
    We now show that the algorithm will eventually expand $n_k$ by applying $\alpha_{k+1}$.
    By Lemma \ref{lem:path} we get $t_k \bisim_{b_k} s_0\otimes \pi_{\leq k}$, $b_k \geq \mathbf{b} - \Sigma_{i \leq k} \md(\alpha)$, and $b_k \geq \md(\phi_g)$, where $\mathbf{b} = \mathsf{c} \mathsf{u} + \mathsf{o}$.
    Since $\md(\alpha) \leq \mathsf{c}$ for all $\alpha \in \actionSet$ and $k \leq l-1 \leq \mathsf{u}-1$, we get $b_k \geq \mathbf{b} - \Sigma_{i \leq k} \md(\alpha) \geq \mathsf{c} \mathsf{u} + \mathsf{o} - k\mathsf{c} \geq  \mathsf{c} \mathsf{u} + \mathsf{o} - \mathsf{c}(\mathsf{u}-1) = \mathsf{c} + \mathsf{o} \geq \md(\alpha_{k+1}) + \md(\phi_g)$, and thus the test in line \ref{alg:line:bs-check} will be satisfied.
    Moreover, since $\alpha_{k+1}$ is applicable in $s_0 \otimes \pi_{\leq k}$ ($\pi$ is a solution) and $t_k \bisim_{b_k} s_0 \otimes \pi_{\leq k}$, $\alpha_{k+1}$ is applicable in $t_k$ too, so $\alpha_{k+1}$ will eventually be chosen in line~\ref{alg:line:bs-loop}, causing $\textsc{ChildNode}(n_k,\alpha_{k+1}, \mathsf{c} \mathsf{u} + \mathsf{o})$ to be called.

    Thus the algorithm will eventually compute $\graphPath{\pi}$.
    It only remains to show that when the last node $n_l = (t_l, b_l, \mu_l)$ of the path is popped, a solution will be returned, \ie that $t_l \models \phi_g$.
    From Lemma \ref{lem:path}, we get $t_l \bisim_{b_l} s_0 \otimes \pi$ and $b_l \geq \md(\phi_g)$
    and from Proposition~\ref{prop:b-bisim} that $t_l$ and $s_0 \otimes \pi$ agree on all formulas up to depth $\md(\phi_g)$.
    Since $\pi$ is a solution to $T$, we get $s_0 \otimes \pi \models \phi_g$ and hence $t_l \models \phi_g$, as required. 
\end{proof}
}{
\begin{proof}[Proof sketch]
    Assume $T$ has a solution $\pi$ of length $\mathsf{u}$. Lemma~\ref{lem:path} is then used to prove that   
    the search tree constructed by 
    \textnormal{\textsc{BoundedSearch}}$(T,\mathbf{b})$
    will contain $\graphPath{\pi}$ when   $\mathbf{b} \geq \mathsf{c} \mathsf{u} + \mathsf{o}$. 
    When the last node of the path is reached, Lemma~\ref{lem:path} and Proposition~\ref{prop:b-bisim} ensure that the goal formula holds, so the algorithm finds a solution.
\end{proof}
}

Theorem~\ref{th:completeness}
shows that \textsc{BoundedSearch} will always solve a planning task $T$ if the bound is high enough. 
Since $\IBDS$ calls $\textsc{BoundedSearch}$ iteratively with increasing bounds, it will eventually find a solution if one exists. 

\begin{theorem}[Complexity]\label{th:complexity}
    \textnormal{$\textsc{BoundedSearch}(T,b)$} runs in time $|T|^{O(1)} \exp_2^{b+1} O(\mathsf{a} {+} \mathsf{p})$.\footnote{Where the iterated exponential $\exp^n_a x$ is defined as follows: $\exp^0_a x = x$ and $\exp^{n+1}_a x = a^{(\exp^n_a x)}$.}
\end{theorem}

\ifthenelse{\boolean{expandedVersion}}{
\begin{proof}
    In $\textsc{BoundedSearch}(T,b)$ the bound of the initial node is $b$ (Algorithm~\ref{alg:bounded-search}, line~\ref{alg:line:frontier-init}), and the bound of each visited child node is either preserved from the parent node (Algorithm~\ref{alg:improved-functions}, line~\ref{alg:line:update-exact}), or it is lower than the parent's bound (Algorithm~\ref{alg:improved-functions}, line~\ref{alg:line:update-approx}), so the bound of any visited node is at most $b$.
    Furthermore, the condition in line~\ref{alg:line:bs-check} guarantees that the bound of any visited node is at least $0$.
    Let then $0 \leq h \leq b$ be a bound value, and let $\bm{\sigma}_h$ be the set of $h$-signatures that can be built from the agents and atoms in $T$. By induction on $h$, we now prove $| \bm{\sigma}_h | = \exp_2^{h+1} O(\mathsf{a} {+} \mathsf{p})$. Base case: As $\sigma_0(w) = (L(w),\varnothing)$, we have $| \bm{\sigma}_0 | = 2^\mathsf{p} = 2^{O(\mathsf{a} {+} \mathsf{p})} = \exp_2^1 O(\mathsf{a} {+} \mathsf{p})$. Induction step: $\sigma_{h+1}(w) = (L(w),\Sigma_{h+1}(w))$, where $\Sigma_{h+1}(w)$ is a mapping from $\agentSet$ into $2^{\bm{\sigma}_h}$.
    There are $\mathsf{a}$ agents and $| \bm{\sigma}_{h} |$ $h$-signatures, so there are $(2^{| \bm{\sigma}_{h} |})^\mathsf{a}$ such mappings.
    Thus using i.h. we have $| \bm{\sigma}_{h+1} | = 2^\mathsf{p} (2^{|\bm{\sigma}_{h}|})^ \mathsf{a}  = 2^{\mathsf{p} + \mathsf{a}|\bm{\sigma}_{h}| } = 2^{\mathsf{p} + \mathsf{a}(\exp_2^{h+1} O(\mathsf{a} {+} \mathsf{p}))} = 2^{\exp_2^{h+1} O(\mathsf{a} {+} \mathsf{p})} = \exp_2^{h+2} O(\mathsf{a} {+} \mathsf{p})$, as required.
    Let now $s = (M,w_s)$ and $t = (N,w_t)$ be two states and let $0 \leq h \leq b$. 
    Since $\canonicalContr{s}{h} = \canonicalContr{t}{h}$ iff $s \bisim_h t$ iff $\Sign{w_s}{h} = \Sign{w_t}{h}$ (Theorem~\ref{th:canonical-contractions} and Lemma~\ref{th:same-signature}), the number of $h$-contracted states equals the number of $h$-signatures.
    As we are doing graph search and the bounds of visited nodes range from $0$ to $b$, the search graph contains at most one node per (canonically) $h$-contracted state \ie at most $|\bm{\sigma}_b| = \exp_2^{b+1} O(\mathsf{a} {+} \mathsf{p}) + \exp_2^{b} O(\mathsf{a} {+} \mathsf{p}) + \dots + \exp_2^{1} O(\mathsf{a} {+} \mathsf{p}) = \exp_2^{b+1} O(\mathsf{a} {+} \mathsf{p})$ nodes.

    The proof of Proposition 9 in \citeauthor{journal/logcom/Bolander2022}~[\citeyear{journal/logcom/Bolander2022}] shows that the number of worlds in each $b$-contracted state is bounded by $g(b,\mathsf{a},\mathsf{p})$ where $g(0,\mathsf{a},\mathsf{p}) = 2^\mathsf{p}$ and $g(h{+}1,\mathsf{a},\mathsf{p}) = 2^{\mathsf{a} g(h,\mathsf{a},\mathsf{p})}g(h,\mathsf{a},\mathsf{p})$. By an induction proof almost identical to the above, we easily get that $g(b,\mathsf{a},\mathsf{p}) = \exp_2^{b+1} O(\mathsf{a} {+} \mathsf{p})$. Hence also the size of each $b$-contracted state is bounded by $\exp_2^{b+1} O(\mathsf{a} {+} \mathsf{p})$.    

    For each node $n$ of the search graph, the algorithm computes at most $|\actionSet|$ $b$-contracted updates (product updates followed by $b$-contractions). From the proof of Theorem 4 in \citeauthor{journal/logcom/Bolander2022}~[\citeyear{journal/logcom/Bolander2022}], we know that the computation time of each $b$-contracted update is polynomial in the size of $n$.state and $|T|$. As each state is $b$-contracted, $n$.state has size $\exp_2^{b+1} O(\mathsf{a} {+} \mathsf{p})$, so each $b$-contracted update takes time $|T|^{O(1)} \exp_2^{b+1} O(\mathsf{a} {+} \mathsf{p})$. As we are performing at most $|\actionSet| = O(|T|)$
    of these operations per state, and there are at most $\exp_2^{b+1} O(\mathsf{a} {+} \mathsf{p})$ states, we get that the total computation time for all the $b$-contracted updates (including checking for the applicability of actions) is still 
    $|T|^{O(1)}\exp_2^{b+1} O(\mathsf{a} {+} \mathsf{p})$.
    We compare each $b$-contracted update to all existing states in the search graph. The total computation cost of this is polynomial in the size and number of states, so also bounded by $|T|^{O(1)} \exp_2^{b+1} O(\mathsf{a} {+} \mathsf{p})$. 
    Finally, for each node $n$, we check whether $n\text{.state} \models \phi_g$. Model checking is polynomial in the size of the state and the formula, so also bounded by $|T|^{O(1)} \exp_2^{b+1} O(\mathsf{a} {+} \mathsf{p})$.
\end{proof}
}{
\begin{proof}[Proof sketch]
    The algorithm explores a search tree where each node corresponds to a canonically $b$-contracted state.
    By induction on $b$, we show that the number and size of such states are both bounded by $\exp_2^{b+1} O(\mathsf{a} {+} \mathsf{p})$.
    For each node, at most $O(|T|)$ updates, $b$-contractions and formulas checks are performed, each taking time polynomial in $|T|$ and the size of the state.
\end{proof}
}

The theorem shows that computing epistemic plans is tractable when we put a fixed bound on the reasoning depth and the numbers of agents and atoms ($\mathbf{b}$, $\mathsf{a}$ and $\mathsf{p}$). This might be a quite realistic assumptions for many practical applications.  
The theorem generalises several existing results. If $\mathsf{c}=0$, then $\textsc{BoundedSearch}(T,\mathsf{o})$ will find a solution to $T$ (Theorem~\ref{th:completeness}), and it will do so in time $|T|^{O(1)} \exp_2^{\mathsf{o}+1} O(\mathsf{a} {+} \mathsf{p})$ (Theorem~\ref{th:complexity}).
This shows that we can solve propositional planning tasks (\ie tasks where all pre- and postconditions are propositional) in $(\md(\phi_g){+}1)$-\textsc{ExpTime}, hence providing a new proof of an existing  result~\cite{conf/ijcai/Yu2013,phd/hal/Maubert2014}. Also, if $\mathbf{b} = \mathsf{c} \mathsf{u} + \mathsf{o}$, then 
 $\textsc{BoundedSearch}(T,\mathbf{b})$ is complete (Theorem~\ref{th:completeness}), and it runs in time $|T|^{O(1)} \exp_2^{\mathbf{b} +1 } O(\mathsf{a} {+} \mathsf{p})$ (Theorem~\ref{th:complexity}). This proves that the plan existence problem in epistemic planning is fixed-parameter tractable (FPT) with parameters $\mathsf{a}$, $\mathsf{c}$, $\mathsf{o}$, $\mathsf{p}$ and $\mathsf{u}$, generalizing the existing FPT result for epistemic plan verification~\cite{journal/logcom/Bolander2022}. As the aforementioned paper shows that plan verification is \emph{not} FPT for any subset of these parameters, 
 this is the strongest FPT result for the plan existence problem that one can get (with the given parameter set). 
 


    \pgfplotstableread{time_results/all_results.csv}\allResults
\pgfplotstableread{time_results/efp_results.csv}\efpResults
\begin{figure*}[t]
    \centering
    \subfloat{\label{fig:results-exact}\scalebox{0.8075}{\begin{tikzpicture}
    \centering
    \begin{axis}[
        ymode=log,xmode=log,
        grid=major,
        legend entries={\tiny\textsc{ActMudChild},\tiny\textsc{CoinBox},\tiny\textsc{CollabCom},\tiny\textsc{ConsecNum},\tiny\textsc{Eavesdropping},\tiny\textsc{Gossip},\tiny\textsc{Grapevine},\tiny\textsc{SelectCom},\tiny\textsc{Tiger}},
        legend style={at={(0.31,0.97), anchor=north east}},
        legend cell align={left},
        xlabel={\small $x$: \textsc{exact-ibds} -- $y$: \textsc{mixed-ibds}},
        axis x line*=bottom,
        axis y line*=left,
        xmin=0.1,
        xmax=2000,
        ymin=0.1,
        ymax=2000,
        xticklabels={,, $1^{\color{white} 1}$, $10^{\color{white} 1}$, $10^2$, $10^3$},
        yticklabels={,, $1^{\color{white} 1}$, $10^{\color{white} 1}$, $10^2$, $10^3$},
        tick label style={font=\tiny}
        ]
        \addplot[
            scatter,
            only marks,
            point meta=explicit symbolic,
            scatter/classes={
                acm ={palette4, mark=*,               mark size=2pt},
                cb  ={palette2, mark=+,               mark size=2.5pt},
                cc  ={palette3, mark=x,               mark size=2.5pt},
                cn  ={palette1, mark=star,            mark size=2.5pt},
                eav ={palette9, mark=asterisk,        mark size=2.5pt},
                gos ={palette5, mark=10-pointed star, mark size=2.5pt},
                gra ={palette6, mark=square*,         mark size=2pt},
                sc  ={palette7, mark=triangle*,       mark size=2pt},
                tig ={palette8, mark=diamond*,        mark size=2pt}
            }
        ] table [
            x    index={10},
            y    index={15},
            meta index={0} ] {\allResults};
        \draw[dashed,thick,darkgray] (0.001,0.001) -- (2000,2000);
        \draw[densely dotted,red] (1200,0.001) -- (1200,2000);
        \node[red, rotate=90] at (1700,12) {\tiny \begin{tabular}{c} \textsc{exact-ibds} timeout (104 instances) \end{tabular}};
        \draw[densely dotted,red] (0.001,1200) -- (2000,1200);
        \node[red, rotate=0 ] at (22,1500) {\tiny \begin{tabular}{c} \textsc{mixed-ibds} timeout (44 instances) \end{tabular}};
    \end{axis}
\end{tikzpicture}}}
    \hfill
    \subfloat{\label{fig:results-approx}\scalebox{0.8075}{\begin{tikzpicture}
    \centering
    \begin{axis}[
        ymode=log,xmode=log,
        grid=major,
        legend entries={\tiny\textsc{ActMudChild},\tiny\textsc{CoinBox},\tiny\textsc{CollabCom},\tiny\textsc{ConsecNum},\tiny\textsc{Eavesdropping},\tiny\textsc{Gossip},\tiny\textsc{Grapevine},\tiny\textsc{SelectCom},\tiny\textsc{Tiger}},
        legend style={at={(0.31,0.97), anchor=north east}},
        legend cell align={left},
        xlabel={\small $x$: \textsc{approx-ibds} -- $y$: \textsc{mixed-ibds}},
        axis x line*=bottom,
        axis y line*=left,
        xmin=0.1,
        xmax=2000,
        ymin=0.1,
        ymax=2000,
        xticklabels={,, $1^{\color{white} 1}$, $10^{\color{white} 1}$, $10^2$, $10^3$},
        yticklabels={},
        tick label style={font=\tiny}
        ]
        \addplot[
            scatter,
            only marks,
            point meta=explicit symbolic,
            scatter/classes={
                acm ={palette4, mark=*,               mark size=2pt},
                cb  ={palette2, mark=+,               mark size=2.5pt},
                cc  ={palette3, mark=x,               mark size=2.5pt},
                cn  ={palette1, mark=star,            mark size=2.5pt},
                eav ={palette9, mark=asterisk,        mark size=2.5pt},
                gos ={palette5, mark=10-pointed star, mark size=2.5pt},
                gra ={palette6, mark=square*,         mark size=2pt},
                sc  ={palette7, mark=triangle*,       mark size=2pt},
                tig ={palette8, mark=diamond*,        mark size=2pt}
            }
        ] table [
            x    index={18},
            y    index={15},
            meta index={0} ] {\allResults};
        \draw[dashed,thick,darkgray] (0.001,0.001) -- (2000,2000);
        \draw[densely dotted,red] (1200,0.001) -- (1200,2000);
        \node[red, rotate=90] at (1700,12) {\tiny \begin{tabular}{c} \textsc{approx-ibds} timeout (52 instances) \end{tabular}};
        \draw[densely dotted,red] (0.001,1200) -- (2000,1200);
        \node[red, rotate=0 ] at (22,1500) {\tiny \begin{tabular}{c} \textsc{mixed-ibds} timeout (44 instances) \end{tabular}};
    \end{axis}
\end{tikzpicture}}}
    \hfill
    \subfloat{\label{fig:results-rooted}\scalebox{0.8075}{\begin{tikzpicture}
    \centering
    \begin{axis}[
        ymode=log,xmode=log,
        grid=major,
        legend entries={\tiny\textsc{ActMudChild},\tiny\textsc{CoinBox},\tiny\textsc{CollabCom},\tiny\textsc{ConsecNum},\tiny\textsc{Eavesdropping},\tiny\textsc{Gossip},\tiny\textsc{Grapevine},\tiny\textsc{SelectCom},\tiny\textsc{Tiger}},
        legend style={at={(0.31,0.97), anchor=north east}},
        legend cell align={left},
        xlabel={\small $x$: \textsc{rooted-ibds} -- $y$: \textsc{mixed-ibds}},
        axis x line*=bottom,
        axis y line*=left,
        xmin=0.1,
        xmax=2000,
        ymin=0.1,
        ymax=2000,
        xticklabels={,, $1^{\color{white} 1}$, $10^{\color{white} 1}$, $10^2$, $10^3$},
        yticklabels={},
        tick label style={font=\tiny}
        ]
        \addplot[
            scatter,
            only marks,
            point meta=explicit symbolic,
            scatter/classes={
                acm ={palette4, mark=*,               mark size=2pt},
                cb  ={palette2, mark=+,               mark size=2.5pt},
                cc  ={palette3, mark=x,               mark size=2.5pt},
                cn  ={palette1, mark=star,            mark size=2.5pt},
                eav ={palette9, mark=asterisk,        mark size=2.5pt},
                gos ={palette5, mark=10-pointed star, mark size=2.5pt},
                gra ={palette6, mark=square*,         mark size=2pt},
                sc  ={palette7, mark=triangle*,       mark size=2pt},
                tig ={palette8, mark=diamond*,        mark size=2pt}
            }
        ] table [
            x    index={21},
            y    index={15},
            meta index={0} ] {\allResults};
        \draw[dashed,thick,darkgray] (0.001,0.001) -- (2000,2000);
        \draw[densely dotted,red] (1200,0.001) -- (1200,2000);
        \node[red, rotate=90] at (1700,12) {\tiny \begin{tabular}{c} \textsc{rooted-ibds} timeout  (170 instances) \end{tabular}};
        \draw[densely dotted,red] (0.001,1200) -- (2000,1200);
        \node[red, rotate=0 ] at (22,1500) {\tiny \begin{tabular}{c} \textsc{mixed-ibds} timeout (44 instances) \end{tabular}};
    \end{axis}
\end{tikzpicture}}}
    \caption{
        Scatter plots of the running times, in \emph{seconds}, of the algorithms using a logarithmic scale.
        From left to right, the plots compare the running times of \textsc{mixed-ibds} ($y$-axis) with those of \textsc{exact-ibds}, \textsc{approx-ibds} and \textsc{rooted-ibds}, respectively ($x$-axis).
        Color and shape of instances denote their domain.
        The diagonals (grey dashed lines) separate instances based on which algorithm found a solution faster.
    }
    \label{fig:results}
\end{figure*}

\section{Experimental Evaluation}
    We have compared \textsc{mixed-ibds} against three baseline configurations of \textsc{ibds}:
    \begin{inparaenum}
        \item \textsc{approx-ibds};
        \item \textsc{exact-ibds} (a version that always runs in exact mode by choosing a sufficiently large initial search bound); and
        \item \textsc{rooted-ibds} (a variant of \textsc{mixed-ibds} where canonical $b$-contractions are replaced by rooted $b$-contractions, cf.\ Definition~\ref{def:rooted-b-contr}).\footnote{All configurations rely on the same data structures and functions, except for \textsc{rooted-ibds}, which uses rooted $b$-contractions.}
    \end{inparaenum}
    The algorithms were implemented in C++17 in the novel epistemic planner \textsc{daedalus} (\emph{DynAmic Epistemic and DoxAstic Logic Universal Solver}).\footnote{Available at \url{https://github.com/a-burigana/daedalus}.}
    We tested each configuration on 406 instances (20 minutes timeout) from 9 epistemic planning domains (described in the Appendix): \emph{Active Muddy Child}, \emph{Collaboration through Communication}, \emph{Selective Communication} \cite{conf/aips/Kominis2015}, \emph{Coin in the Box} \cite{journals/corr/Baral2015}, \emph{Consecutive Numbers} \cite{book/sip/vanDitmarschK2015}, \emph{Gossip} \cite{attamahMHD/2014/Knowledge,vanditmarsch/2016/Epistemic,vanditmarschHJPR/2017/Epistemic}, \emph{Grapevine} \cite{conf/aaai/Muise2015} and \emph{Tiger} \cite{herzigAJD/2000/Logic}, plus a novel domain called \emph{Eavesdropping}.
    Recently, \citeauthor{conf/kr/BolanderDH21} (\citeyear{conf/kr/BolanderDH21}) developed a DEL solver that constructs \emph{policies} (mappings from states to actions), which are more general than sequential plans considered in this paper, so we can not directly compare the two approaches.
    
    We now analyze the results of \textsc{mixed-ibds} against the baseline configurations (Figure~\ref{fig:results}) and \textsc{efp 2.0} (Figure~\ref{fig:results-efp}).\footnote{Raw data of tests results are in the Appendix.}

    \noindent\textbf{\textsc{mixed} vs.\ \textsc{exact}.}
    \textsc{exact-ibds} solved 302 instances (74.4\%), while \textsc{mixed-ibds} solved 362 (89.2\%), ${\sim}15\%$ more.
    Since \textsc{exact-ibds} always runs in exact mode, \ie no state is approximated by bounded contractions, bounded contractions behave like standard bisimulation contractions.
    Thus, \textsc{mixed-ibds} computes smaller states than \textsc{exact-ibds}, improving runtimes on the vast majority of the cases, (except for Consecutive Numbers, discussed below), and often finding solutions to instances where \textsc{exact-ibds} timed out, as evidenced by the vertical streak of marks in the time plot (Figure~\ref{fig:results-exact}).
    By computing the \emph{average speedup} (the average of the ratios of the running times) of \textsc{mixed-ibds} wrt.\ \textsc{exact-ibds}, we see that the former configuration is $87.7$ times faster than the latter, with most significant improvements on Active Muddy Child ($211.9{\times}$) and Eavesdropping ($376.7{\times}$), where visited states have a considerable sizes.
    Specifically, in Eavesdropping the size of states grows exponentially after each action, but since \textsc{mixed-ibds} takes $0$-contractions, it collapses states to singleton states, while \textsc{exact-ibds} keeps them in full.
    Finally, in the Consecutive Numbers domain (red star marks), \textsc{mixed-ibds} needs $|\pi|$ iterations to compute plan $\pi$ (cf.\ Example~\ref{ex:ibds}), while \textsc{exact-ibds} only takes one, as the initial search bound is already large enough, and it returns a plan more quickly.

    \noindent\textbf{\textsc{mixed} vs.\ \textsc{approx}.}
    \textsc{approx-ibds} solved 354 instances (${\sim}2\%$ less than \textsc{mixed-ibds}).
    \textsc{mixed-ibds} outperforms \textsc{approx-ibds} in Tiger ($3.1{\times}$), Coin in the Box ($3.4{\times}$), and Grapevine ($74.4{\times}$), and is comparable to \textsc{approx-ibds} in the remaining domains ($6{\times}$ average speedup).

    \noindent\textbf{\textsc{mixed} vs.\ \textsc{rooted}.}
    \textsc{rooted-ibds} solved 236 instances (${\sim}31\%$ less than \textsc{mixed-ibds}).
    The average speedup of \textsc{mixed-ibds} wrt.\ \textsc{rooted-ibds} is $57.2{\times}$.
    As expected, canonical $b$-contractions allow to efficiently check for visited states via set-membership checks.
    Instead, with rooted $b$-contractions, we have to rely on less efficient techniques involving repeated invocations of bounded partition refinement, incurring a significant overhead (cf.\ Section \ref{sec:algorithms}).

    \noindent\textbf{\textsc{mixed-ibds} vs.\ \textsc{efp 2.0}.}
        \textsc{efp 2.0} is based on the $m\mathcal{A}^*$ action language~\cite{journals/corr/Baral2015} and uses a \emph{possibility semantics} for DEL, which improves over Kripke-based implementations~\cite{conf/icaps/Fabiano2020}.
        Since $m\mathcal{A}^*$ does not have the full expressivity of DEL, we could only compare \textsc{efp 2.0} with \textsc{mixed-ibds} on a subset of our benchmark tasks (220 tasks in total, excluding Active Muddy Child, Consecutive Numbers, and Eavesdropping).
        \textsc{efp 2.0} solved 95 instances (42.2\%), while \textsc{mixed-ibds} solved 189 (85.9\%). 
        To account for \textsc{efp 2.0}'s overhead in constructing initial states from formulas, we allowed 10 extra minutes per test, but many still timed out.
        On instances solved by both, \textsc{mixed-ibds} was $279.5{\times}$ faster on average.
        \textsc{efp 2.0} is faster in the Gossip domain (light blue stars above the diagonal in Figure~\ref{fig:results-efp}), as its possibility semantics enables efficient reuse of previously computed information, especially for private announcements~\cite{conf/icaps/Fabiano2020,conf/jelia/BuriganaFM23}, present in Gossip.

    \ifthenelse{\boolean{expandedVersion}}{
        \noindent\textbf{Knowledge Base Approaches.}
        Recently, \textsc{efp 2.0} has been benchmarked against the epistemic planner underlying the \emph{restricted perspectival multi-agent epistemic planning} (RP-MEP) framework \cite{muiseBFMMPS/2022/Efficient}. RP-MEP represents an epistemic state as a set of formulas, called a \emph{proper epistemic knowledge base} (PEKB), and uses belief update and revision to progress PEKBs.
        Moreover, it assumes that the modal depth of formulas in each PEKB is bounded by some constant $d$. This differs fundamentally from our approach, where the bound on reasoning depth is incremented during search. Their planner compiles RP-MEP problems into classical/FOND planning.
        The RP-MEP planner achieves better performance than \textsc{efp 2.0} at shallow depths, but its performance deteriorates when increasing the modal depth of formulas, as its inner compilation step produces plans of exponential size w.r.t.~the original ones.
        Given that \textsc{mixed-ibds} substantially outperforms \textsc{efp 2.0}, it is reasonable to expect that \textsc{mixed-ibds} would be competitive with the RP-MEP planner even at low modal depths, while also scaling more effectively as modal depth increases. Nevertheless, we consider an important future work to provide an experimental comparison of our algorithm and other solvers based on non-DEL approaches, including RP-MEP.
    }{}

    \ifthenelse{\boolean{expandedVersion}}{
    \noindent\textbf{Required depth of reasoning.}
        To better understand which problems can be solved at low depths of reasoning, we computed the average modal depth of the solutions to our benchmarks.
        This value is $6.9$ and drops to $2.4$ when omitting the Consecutive Number domain, specifically considered to force the planner to use maximal reasoning depth. This indicates that many state-of-the-art benchmarks can be solved with a quite limited reasoning bound, which naturally matches our approach.
    }{}

    \begin{figure}[t]
        \centering
        \scalebox{0.8075}{\begin{tikzpicture}
    \centering
    \begin{axis}[
        ymode=log,xmode=log,
        grid=major,
        legend entries={\tiny\textsc{CoinBox},\tiny\textsc{CollabCom},\tiny\textsc{Gossip},\tiny\textsc{Grapevine},\tiny\textsc{SelectCom},\tiny\textsc{Tiger}},
        legend style={at={(0.31,0.97), anchor=north east}},
        legend cell align={left},
        axis x line*=bottom,
        axis y line*=left,
        xmin=0.1,
        xmax=2000,
        ymin=0.1,
        ymax=2000,
        xticklabels={,, $1^{\color{white} 1}$, $10^{\color{white} 1}$, $10^2$, $10^3$},
        yticklabels={,, $1^{\color{white} 1}$, $10^{\color{white} 1}$, $10^2$, $10^3$},
        tick label style={font=\tiny}
        ]
        \addplot[
            scatter,
            only marks,
            point meta=explicit symbolic,
            scatter/classes={
                cb  ={palette2, mark=+,               mark size=2.5pt},
                cc  ={palette3, mark=x,               mark size=2.5pt},
                gos ={palette5, mark=10-pointed star, mark size=2.5pt},
                gra ={palette6, mark=square*,         mark size=2pt},
                sc  ={palette7, mark=triangle*,       mark size=2pt},
                tig ={palette8, mark=diamond*,        mark size=2pt}
            }
        ] table [
            x    index={12},
            y    index={11},
            meta index={0} ] {\efpResults};
        \draw[dashed,thick,darkgray] (0.001,0.001) -- (2000,2000);
        \draw[densely dotted,red] (1200,0.001) -- (1200,2000);
        \node[red, rotate=90] at (1700,12) {\tiny \begin{tabular}{c} \textsc{efp} timeout (125 instances) \end{tabular}};
        \draw[densely dotted,red] (0.001,1200) -- (2000,1200);
        \node[red, rotate=0 ] at (22,1500) {\tiny \begin{tabular}{c} \textsc{mixed-ibds} timeout (31 instances) \end{tabular}};
    \end{axis}
\end{tikzpicture}}
        \caption{Scatter plot of \textsc{mixed-ibds} ($y$-axis) vs.\ \textsc{efp 2.0} ($x$-axis).}
        \label{fig:results-efp}
    \end{figure}

    \section{Discussion}
    Our \textsc{ibds} algorithm provides a novel perspective on DEL-planning.
Instead of searching for plans by iterating on plan length, we iterate on reasoning bound (modal depth).  
The average modal depth of solutions is $6.9$ overall, and $2.4$ when excluding the Consecutive Numbers domain that was specifically designed to exploit maximal reasoning depths. 
This suggests that most planning tasks can be solved with low reasoning depths, an interesting avenue for further exploration.
    Our novel search strategy improves state-of-the-art (\textsc{efp 2.0}), showing that 
    fully general DEL planners can be efficient. We conjecture that 
     our approach can also be adapted to outperform the planner of \cite{conf/kr/Bolander2021},
     as their method 
     relies on standard bisimulation contractions only.

    Our paper considered only \emph{single-pointed} states, but the \emph{multi-pointed} case can be covered by translating into single-pointed states~\cite{conf/kr/BolanderDH21}.
    
    As shown, \textsc{mixed-ibds} performs better than \textsc{exact-ibds} in Consecutive Numbers, as it requires more iterations to find a bound that admits a plan.
    We plan to tackle this issue in the future, e.g.\ by doubling the bound at each iteration, and/or via heuristics to ``guess'' a better initial bound.

    \myparagraph{Acknowledgements}
    This work is partially funded by the NextGenerationEU FAIR PE0000013 project MAIPM (CUP C63C22000770006).  Thomas Bolander is supported by the Independent Research Fund Denmark (grant
no. 10.46540/4258-00060B). 

    \bibliographystyle{kr}
    \bibliography{bibliography}
\end{document}